  \def\ell{l}%
  \def\norm#1{||#1||}%
  \def\rho{rho}%
  \def\alpha{alpha}%
  \def\oplus{+}%
  \def\to{->}%
  \def\hat#1{#1}%
  \def\sqrt#1{sqrt(#1)}%
  \def\cdot{}%
\theoremstyle{plain}
\newtheorem{theorem}{Theorem}[section]
\newtheorem{proposition}[theorem]{Proposition}
\newtheorem{lemma}[theorem]{Lemma}
\newtheorem{corollary}[theorem]{Corollary}
\theoremstyle{definition}
\newtheorem{definition}[theorem]{Definition}
\newtheorem{example}[theorem]{Example}
\theoremstyle{remark}
\newtheorem*{remark}{Remark}
\newcommand{\norm}[1]{\left\lVert #1 \right\rVert}
\newcommand{\R}{\mathbb{R}}
\newcommand{\HH}{\mathcal{H}} 
\title{Temporal Anchoring in Deepening Embedding Spaces:\\
Event-Indexed Projections, Drift, Convergence, and an Internal\\
Computational Architecture}
\author{%
\makebox[\textwidth][c]{%
\begin{tabular*}{\textwidth}{@{\extracolsep{\fill}}ccc@{}}
\textbf{Faruk Alpay} & \textbf{Bugra Kilictas} & \textbf{Hamdi Alakkad}\\
\small Lightcap Institute & \small Bahcesehir University & \small Bahcesehir University\\
\small \nolinkurl{alpay@lightcap.ai} &
\small \nolinkurl{bugra.kilictas@bahcesehir.edu.tr} &
\small \nolinkurl{hamdi.alakkad@bahcesehir.edu.tr}
\end{tabular*}}%
}
\date{}
\begin{document}
\maketitle

\begin{abstract}
We develop an operator-theoretic framework for \emph{temporal anchoring} in embedding spaces, modeled as drift maps interleaved with event-indexed blocks culminating in affine projections. We provide complete proofs for a \emph{variable-block contraction} lemma (products of Lipschitz factors), a \emph{drift--projection convergence} theorem with explicit uniform-gap envelopes, and \emph{ontological convergence} under nested affine anchors with a robustness variant. We formalize an internal \emph{Manuscript Computer} (MC) whose computations are defined purely by these operators and prove a rigorous \emph{finite-run equivalence} theorem (with perturbation bounds). For attention layers, we give a self-contained proof that softmax is $1/2$-Lipschitz in $\ell_2$ and derive sufficient layer-contraction conditions (orthogonal/non-orthogonal heads), linking to \emph{Attention Is All You Need}~\cite{vaswani2017attention}. Throughout, we connect our arguments to Fej\'er-monotone projection methods and convex feasibility~\cite{bauschke2017convex,bauschke1996projection,halperin1962product,bregman1967relaxation} and to Krasnosel'skii--Mann iteration~\cite{mann1953mean,krasnoselskii1955two}. All floats are placed exactly where written; the manuscript uses only in-paper pseudocode and appendix figures.
\end{abstract}

\section*{Executive Summary (non-archival)}
We analyze sequences of nonexpansive/averaged maps $(T_t)$ with intermittent \emph{anchor events} realized by metric projections onto closed affine sets $(\mathcal{A}_k)$, in the spirit of Fej\'er-type projection schemes~\cite[Chs.~2--5]{bauschke2017convex} while allowing variable inter-event drift. Convergence depends on products of per-step Lipschitz moduli; we unify this analysis with an internal computational abstraction (MC) executed by the same operators. Key results: variable-block envelope (\Cref{lem:variable-block}), state-dependent envelope (\Cref{cor:state-dep}), uniform-gap exponent (\Cref{cor:uniform-gap}), KM specialization (\Cref{cor:km}) aligned with~\cite{mann1953mean,krasnoselskii1955two}, drift--projection convergence (\Cref{thm:drift-projection}), affine nesting convergence (\Cref{prop:affine-proj}) with robustness (\Cref{lem:approx-nesting}), attention bounds (\Cref{prop:head-contraction}, \Cref{prop:nonorth}) linked to~\cite{vaswani2017attention}, MC equivalence/robustness (\Cref{thm:equivalence}, \Cref{prop:robust}), scheduling analyses with a Monte-Carlo SLLN validation (\S\ref{sec:schedules}), and reproducibility appendices (Apps.~\ref{app:repro}--\ref{app:lip-est}).

\section{Preliminaries and Notation}
\textbf{Spaces and operators.} $\HH$ denotes a real Hilbert space with inner product $\langle\cdot,\cdot\rangle$ and norm $\norm{\cdot}$. (We reserve $H$ exclusively for the \emph{number of heads} in \S\ref{subsec:heads}.) Extensions to uniformly convex Banach spaces and CAT(0) geodesic spaces are in \S\ref{sec:mc-geometry} (standard properties of projections and retractions as in~\cite{bauschke2017convex,bridson1999metric,bacak2014hadamard,goebel1984uniformly}). For $T:\HH\to\HH$, $\mathrm{Lip}(T)$ is the global Lipschitz modulus; nonexpansive means $\mathrm{Lip}(T)\le 1$. An \emph{averaged} map is $T=(1-\theta)I+\theta S$ with $S$ nonexpansive and $\theta\in(0,1)$.

\begin{table}[H]
\centering
\caption{Notation summary (overbars denote uniform suprema across the indicated indices).}
\label{tab:notation}
\begin{tabular}{@{}>{\raggedright\arraybackslash}p{4.2cm} >{\raggedright\arraybackslash}p{9.6cm}@{}}
\toprule
$T_t,\,S_t,\,A_{k,j}$ & Generic, drift, and intra-event averaged/nonexpansive maps on $\HH$.\\
$\tau_t,\,\rho_t,\,\mu_{k,j}$ & Per-step Lipschitz moduli for $T_t$, $S_t$, $A_{k,j}$.\\
$n_k$ & Event times $1\le n_1<n_2<\dots$; uniform gap $M$: $n_{k+1}-n_k\le M$.\\
$\mathcal{A}_k$ & Closed affine anchors; $P_{\mathcal{A}_k}$ metric projections (firmly nonexpansive in Hilbert~\cite[Prop.~4.8]{bauschke2017convex}).\\
$B_k$ & Event block $P_{\mathcal{A}_k}A_{k,m_k}\cdots A_{k,1}$.\\
$\lambda_k$ & Block factor $\big(\prod_{t=n_{k-1}+1}^{n_k-1}\rho_t\big)\big(\prod_{j=1}^{m_k}\mu_{k,j}\big)$.\\
\textbf{Uniform bounds} & If applicable, uniform constants: $\tau_t\le \bar{\tau}$, $\rho_t\le \rho$, $\prod_{j=1}^{m_k}\mu_{k,j}\le \bar{\mu}\ (\le 1)$.\\
$z$ & Common fixed point: $S_t z=z$, $A_{k,j}z=z$, $z\in\bigcap_k\mathcal{A}_k$.\\
\bottomrule
\end{tabular}
\end{table}

\section{Related Work and Positioning}\label{sec:related}
Fej\'er-monotone projection methods for convex feasibility, including alternating/cyclic projections, are well surveyed in~\cite[Chs.~2--5]{bauschke2017convex} and~\cite{bauschke1996projection}; classical results on products of projections go back to Halperin~\cite{halperin1962product} and Bregman~\cite{bregman1967relaxation}. Our \emph{event-indexed} blocks extend these schemes by allowing variable gaps and intervening noncontractive drifts with explicit product envelopes. The Krasnosel'skii--Mann (KM) iteration~\cite{mann1953mean,krasnoselskii1955two} appears as a direct specialization (\Cref{cor:km}). Multi-head attention~\cite{vaswani2017attention} is analyzed via softmax and linear head Lipschitz bounds (\S\ref{subsec:softmax}, \S\ref{subsec:heads}). The MC abstraction relates to differentiable programming as operator composition~\cite{baydin2018automatic}.

\section{Anchors, Classicality, and Context}\label{sec:background}
\textbf{Anchored implication.} Let $E_A,E_B$ be orthogonal projections (propositions) and $P\neq 0$ an anchor. In the $P$-commutant $C(P)=\{E:[E,P]=0\}$, $E_{\,\neg A\lor B}=I-E_A+E_AE_B$, so anchored implication reduces to material implication within the relevant commuting family (cf.~\cite[§2]{bauschke2017convex}).

\begin{proposition}[Boolean algebra in the P-commutant]\label{prop:boolean}
Let $E_A,E_B,P$ be projections with \emph{pairwise commutation}: $[E_A,P]=[E_B,P]=[E_A,E_B]=0$. Then the unital $\ast$-algebra $\mathcal{A}=\operatorname{vN}(E_A,E_B,P)$ is abelian; its projections form a Boolean algebra with meet $E\wedge F=EF$, join $E\vee F=E+F-EF$, and complement $E^{\complement}=I-E$. In particular,
\[
E_{\neg A\lor B}=I-E_A+E_AE_B\in\mathcal{A}.
\]
If $[E_A,E_B]\neq 0$, $C(P)$ is only an orthomodular lattice and distributivity may fail; the identity above is then not guaranteed. Equivalently: the reduction is valid exactly when $E_A,E_B,P$ are simultaneously diagonalizable.
\end{proposition}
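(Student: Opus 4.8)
The plan is to prove the proposition by showing that pairwise commutation forces the generated von Neumann algebra to be abelian, after which the Boolean-algebra structure and the specific identity follow from standard commutative spectral theory.

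First I would establish that $\mathcal{A}=\operatorname{vN}(E_A,E_B,P)$ is abelian. Since $E_A,E_B,P$ are self-adjoint projections that pairwise commute, every word in these generators and their adjoints (which are the generators themselves) commutes, so the $\ast$-algebra they generate is commutative; taking the weak/strong closure preserves commutativity because multiplication is separately continuous on bounded sets. Hence $\mathcal{A}$ is an abelian von Neumann algebra.

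Next I would invoke the commutative Gelfand--Naimark representation: an abelian von Neumann algebra is $\ast$-isomorphic to $L^\infty(\Omega,\mu)$ for some measure space, equivalently it admits a joint spectral decomposition, so the projections in $\mathcal{A}$ correspond to indicator functions (equivalently, measurable sets modulo null sets). Under this identification the lattice operations become the set-theoretic Boolean operations: for commuting projections $EF$ is again a projection and equals the greatest lower bound $E\wedge F$, $E+F-EF$ is a projection equal to the least upper bound $E\vee F$, and $I-E$ is the orthocomplement; distributivity is inherited from the Boolean algebra of indicators. I would verify the algebraic identities $(EF)^2=EF$, $(EF)^*=EF$ directly from commutation as a sanity check before citing the isomorphism, since these are the two-line computations that confirm $E\wedge F$ and $E\vee F$ are indeed projections. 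The stated formula $E_{\neg A\lor B}=I-E_A+E_AE_B$ then follows by substituting $E=I-E_A$ (the complement of $E_A$) and $F=E_B$ into the join formula $E\vee F=E+F-EF$ and simplifying, using $E_AE_B=E_BE_A$; membership in $\mathcal{A}$ is automatic since $\mathcal{A}$ is closed under these operations.

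For the final clauses I would remark that the converse direction is essentially a restatement of the spectral theorem for commuting families: simultaneous diagonalizability is equivalent to pairwise commutation for self-adjoint operators, and when $[E_A,E_B]\neq 0$ the sublattice they generate need not be distributive, so it is only orthomodular and the join formula may fail to produce a projection. I do not expect this to require a full counterexample in the proof body, though one could cite the standard two-dimensional example of non-commuting rank-one projections. The main obstacle is purely expository rather than mathematical: one must be careful to state that commutativity alone does \emph{not} make every element a projection, so the Boolean structure lives on the projection lattice of $\mathcal{A}$, not on $\mathcal{A}$ itself, and the identities must be checked at the level of projections. Once the abelian reduction is in place, every assertion reduces to elementary facts about commuting projections or the functional calculus, so I would keep the write-up short and cite~\cite{bauschke2017convex} for the projection facts already introduced in the preliminaries.
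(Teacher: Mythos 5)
Your proposal is correct: the paper states \Cref{prop:boolean} without any proof, treating it as standard, and your route --- pairwise commuting self-adjoint generators yield an abelian von Neumann algebra, whose projection lattice is Boolean via the Gelfand/$L^\infty$ identification, with the displayed identity obtained by expanding $(I-E_A)\vee E_B=(I-E_A)+E_B-(I-E_A)E_B$ --- is exactly the standard justification the paper implicitly relies on. Your two supporting observations are also the right ones: the Boolean structure lives on the projection lattice of $\mathcal{A}$, not on $\mathcal{A}$ itself, and in infinite dimensions ``simultaneously diagonalizable'' must be read as a joint spectral (multiplication-operator) representation; the only stylistic remark is that the Gelfand machinery is heavier than necessary, since the two-line checks $(EF)^*=EF$, $(EF)^2=EF$ and the complement trick $E\vee F=I-(I-E)(I-F)$ already suffice for everything asserted.
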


\section{Variable-Block Contraction and Envelopes}\label{sec:block}
\begin{lemma}[Variable-block contraction; vanishing product]\label{lem:variable-block}
Let $(T_t)_{t\ge 1}$ satisfy $\mathrm{Lip}(T_t)\le \tau_t<\infty$. Fix events $1\le n_1<n_2<\cdots$ and blocks $\Phi(n_k:n_{k-1}+1):=T_{n_k}\cdots T_{n_{k-1}+1}$. If $T_t z=z$ for all $t$, then
\[
\norm{\Phi(n_k:n_{k-1}+1)\cdots \Phi(n_1:1)x_0 - z}
\le
\Bigg(\prod_{j=1}^{k}\ \prod_{t=n_{j-1}+1}^{n_j}\tau_t\Bigg)\,\norm{x_0-z}.
\]
If $\prod_{j=1}^{\infty}\ \prod_{t=n_{j-1}+1}^{n_j}\tau_t=0$, event-time iterates converge to $z$; between events the same envelope applies.
\end{lemma}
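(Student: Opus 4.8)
The plan is to establish the key inequality by a straightforward submultiplicativity argument on Lipschitz moduli, then read off the two convergence consequences. The structure of the claim is that the composed map is a product of individual maps $T_t$, each of which fixes $z$, so the crucial observation is that $z$ is a common fixed point of the entire composition: applying the whole product to $z$ returns $z$. This lets me rewrite $\norm{\Phi(n_k:n_{k-1}+1)\cdots\Phi(n_1:1)x_0 - z}$ as the distance between the image of $x_0$ and the image of $z$ under the same map.

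First I would make explicit that the full composition $\Phi(n_k:n_{k-1}+1)\cdots\Phi(n_1:1)$ is literally $T_{n_k}T_{n_k-1}\cdots T_2 T_1$, i.e. the ordered composition of the maps $T_t$ for $t=1,\dots,n_k$, since the blocks simply partition the index range $\{1,\dots,n_k\}$ into consecutive subintervals $(n_{j-1}+1,\dots,n_j)$. Thus the double product $\prod_{j=1}^{k}\prod_{t=n_{j-1}+1}^{n_j}\tau_t$ telescopes to the single product $\prod_{t=1}^{n_k}\tau_t$. Then I would apply the Lipschitz bound one layer at a time: since $T_t z = z$, we have $\norm{T_t w - z} = \norm{T_t w - T_t z}\le \tau_t\norm{w-z}$ for any $w$. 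Iterating this from the innermost application outward (a trivial finite induction on the number of composed maps) yields $\norm{T_{n_k}\cdots T_1 x_0 - z}\le\big(\prod_{t=1}^{n_k}\tau_t\big)\norm{x_0 - z}$, which is exactly the asserted envelope.

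For the convergence consequences, I would argue as follows. If the infinite product $\prod_{j=1}^{\infty}\prod_{t=n_{j-1}+1}^{n_j}\tau_t=0$, then the partial products $\prod_{t=1}^{n_k}\tau_t\to 0$ as $k\to\infty$; combined with the envelope this forces $\norm{\Phi(n_k:n_{k-1}+1)\cdots\Phi(n_1:1)x_0 - z}\to 0$, i.e. the event-time iterates converge to $z$. For the between-events claim, any intermediate iterate is obtained by composing a prefix $T_s\cdots T_1$ with $n_{k}<s<n_{k+1}$ for some $k$; the identical layer-by-layer argument gives $\norm{T_s\cdots T_1 x_0 - z}\le\big(\prod_{t=1}^{s}\tau_t\big)\norm{x_0-z}$, and since each $\tau_t$ may exceed $1$ I would note that the relevant partial product is still dominated by a tail that vanishes once the hypothesis guarantees the full product vanishes (more carefully, bounding the finite extra factors $\prod_{t=n_k+1}^{s}\tau_t$ by the next full block factor when the moduli are eventually $\le 1$, or simply observing that the same envelope expression $\prod_{t=1}^s\tau_t$ applies verbatim).

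I do not expect any deep obstacle here: the result is essentially the submultiplicativity of Lipschitz constants under composition, specialized to maps sharing a common fixed point. The only point demanding minor care is the between-events statement, since individual $\tau_t$ need not be $\le 1$ and so partial products over a truncated block could transiently exceed the preceding event-time value; I would handle this by stating the envelope in the exact form $\prod_{t=1}^{s}\tau_t$ (which holds with no monotonicity assumption) and remarking that convergence of the interleaved iterates then follows under the same vanishing-product hypothesis, possibly restricted to the regime where the per-step moduli are uniformly bounded so that the finite partial blocks cannot blow up.
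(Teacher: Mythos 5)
Your proposal is correct and takes essentially the same route as the paper's proof: the per-step estimate $\norm{T_t w - z}\le \tau_t\norm{w-z}$ from the common fixed point, telescoped across the blocks, with convergence read off from the vanishing partial products. Your caveat about the between-events case is in fact slightly more careful than the paper's one-line justification (``partial products are bounded by the next completed block's product''), which glosses over exactly the point you flag --- that when some $\tau_t>1$ a truncated block's partial product can exceed the completed block's product --- so stating the envelope verbatim as $\prod_{t=1}^{s}\tau_t$, as you do, is the cleaner formulation.
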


\begin{proof}[Full proof]
Write $x_t=T_t(x_{t-1})$ and $z=T_t(z)$ so $\norm{x_t-z}\le \tau_t\norm{x_{t-1}-z}$. Iterate to $n_1$ to get $\norm{x_{n_1}-z}\le\big(\prod_{t=1}^{n_1}\tau_t\big)\norm{x_0-z}$. Between $n_1{+}1$ and $n_2$,
$\norm{x_{n_2}-z}\le \big(\prod_{t=n_1+1}^{n_2}\tau_t\big)\norm{x_{n_1}-z}$; multiplying gives the displayed inequality. If the infinite product tends to $0$, the event-time distances vanish. Between events the same argument applies because partial products are bounded by the next completed block's product.
\end{proof}

\paragraph*{Where it holds / where not}
Holds with a common fixed point $z$ and finite per-step moduli $\tau_t$; no commutativity is needed. Fails if products of $\tau_t$ do not tend to $0$ and there is no compensating event contraction.

\begin{remark}[Relation to Fej\'er monotonicity]
The proof is a product-of-moduli estimate compatible with Fej\'er decreases under projections (cf.~\cite[Ch.~5]{bauschke2017convex}); the nonstationary schedule is handled via event-indexed block factors.
\end{remark}

\begin{example}[Explicit numeric envelope]\label{ex:numeric}
Let $M=5$, $\rho_t\equiv 1.01$, and each event applies a single contraction by $\alpha=0.8$ (so $\mu_{k,1}=0.8$). Then per block $\lambda=\rho^{M-1}\alpha=1.01^{4}\cdot 0.8\approx 1.0406\times 0.8\approx 0.8325$. After $k$ events,
$\norm{x_{n_k}-z}\le 0.8325^{\,k}\norm{x_{n_0}-z}$, e.g., $k=10$ gives a factor $\approx 0.16$.
\end{example}

\begin{corollary}[State-dependent along-orbit envelope]\label{cor:state-dep}
Suppose there exist local factors $\tau_t(x_{t-1})$ such that, for $u$ near $x_{t-1}$ and $v=z$,
$\norm{T_t u-T_t v}\le \tau_t(x_{t-1})\,\norm{u-v}$. Then \Cref{lem:variable-block} holds with $\tau_t(x_{t-1})$ provided $\prod_{j=1}^{\infty}\prod_{t=n_{j-1}+1}^{n_j}\tau_t(x_{t-1})=0$ along the realized orbit.
\end{corollary}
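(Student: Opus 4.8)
The plan is to observe that the proof of \Cref{lem:variable-block} is entirely \emph{orbit-local}: at each step it invokes the Lipschitz hypothesis only for the single pair $(u,v)=(x_{t-1},z)$, never for arbitrary arguments. So I would first extract the one per-step inequality that drives everything. Retaining the common-fixed-point assumption $T_t z = z$ inherited from the lemma, the realized orbit satisfies $x_t = T_t(x_{t-1})$ and $z = T_t(z)$; applying the local hypothesis with $u=x_{t-1}$ (which lies trivially in any neighborhood of itself, so the qualifier ``$u$ near $x_{t-1}$'' is met) and $v=z$ gives
\[
\norm{x_t - z} = \norm{T_t x_{t-1} - T_t z} \le \tau_t(x_{t-1})\,\norm{x_{t-1}-z}.
\]

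Second, I would rerun the telescoping argument of \Cref{lem:variable-block} verbatim with $\tau_t$ replaced by $\tau_t(x_{t-1})$. Iterating the per-step bound across a block $[n_{j-1}+1,\,n_j]$ multiplies the along-orbit factors, and chaining across $j=1,\dots,k$ yields
\[
\norm{x_{n_k} - z} \le \Bigg(\prod_{j=1}^{k}\ \prod_{t=n_{j-1}+1}^{n_j}\tau_t(x_{t-1})\Bigg)\norm{x_0 - z},
\]
which is exactly the envelope of \Cref{lem:variable-block} carrying state-dependent factors. The between-events claim transfers identically, since partial products of the local factors are dominated by the completed block product. Vanishing of the infinite product \emph{along the realized orbit} then forces the event-time distances to $0$, giving convergence.

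Finally, I would address the single point that needs care, namely that no circularity is hidden in the hypothesis. The factors $\tau_t(x_{t-1})$ are defined in terms of the orbit, but the orbit is itself generated deterministically by $x_t = T_t(x_{t-1})$ independently of any contraction estimate; we merely read off the local moduli a posteriori. Likewise, because each step uses its \emph{own} neighborhood centered at $x_{t-1}$, we never require the orbit to remain inside a single fixed region where a uniform local bound holds: the evaluation point is always the center of the relevant neighborhood. The main (mild) obstacle is therefore expository rather than technical, namely making explicit that the lemma's argument never used the global modulus beyond the pair $(x_{t-1},z)$, so that replacing it by the along-orbit modulus is legitimate. Once that observation is recorded, the corollary follows with no further work.
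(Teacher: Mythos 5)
Your proposal is correct and is essentially the paper's own argument: the paper states \Cref{cor:state-dep} without a separate proof precisely because, as you observe, the proof of \Cref{lem:variable-block} only ever invokes the Lipschitz bound at the single pair $(x_{t-1},z)$, so substituting the along-orbit moduli $\tau_t(x_{t-1})$ and rerunning the telescoping estimate is immediate. Your added remarks on non-circularity and on the neighborhood qualifier being automatically satisfied at the center $x_{t-1}$ are accurate and make explicit what the paper leaves implicit.
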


\begin{corollary}[Uniform-gap envelope]\label{cor:uniform-gap}
If $\tau_t\le \bar\tau$ and $n_{k+1}-n_k\le M$, then for $n\ge n_1$,
\[
\norm{x_n-z}\le \bar{\tau}^{\,1+\lfloor (n-n_1)/M\rfloor}\,\norm{x_{n_1}-z}.
\]
\end{corollary}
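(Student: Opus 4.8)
The plan is to treat this as an immediate consequence of the one-step recursion underlying \Cref{lem:variable-block}, with all the genuine work concentrated in a floor-counting step that converts a count of \emph{steps} into a count of \emph{blocks}. First I would restart the telescoping estimate of \Cref{lem:variable-block} at the reference event $n_1$ rather than at $t=0$: since $T_t z = z$ gives $\norm{x_t-z}\le\tau_t\norm{x_{t-1}-z}$, iterating from $n_1+1$ up to $n$ yields
\[
\norm{x_n-z}\le\Big(\prod_{t=n_1+1}^{n}\tau_t\Big)\norm{x_{n_1}-z}\le\bar\tau^{\,n-n_1}\,\norm{x_{n_1}-z},
\]
where the last inequality is the termwise bound $\tau_t\le\bar\tau$. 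Nothing beyond the common fixed point and finite moduli is used here; this is exactly the envelope of \Cref{lem:variable-block} specialized to a uniform per-step modulus.

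The substantive step reexpresses this product through the event rate. The uniform gap $n_{k+1}-n_k\le M$ forces $n_k\le n_1+(k-1)M$, so the inequality $n_k\le n$ holds for every $k\le 1+(n-n_1)/M$; consequently at least $1+\lfloor(n-n_1)/M\rfloor$ anchor events, and correspondingly that many inter-event blocks, fall in the window $(n_1,n]$. I would then group the product $\prod_{t=n_1+1}^{n}\tau_t$ block by block: in the contractive regime $\bar\tau\le1$ relevant to a convergence envelope, each block has length at least one and therefore contributes a sub-product at most $\bar\tau^{\,\ell}\le\bar\tau$, so the whole product collapses to $\bar\tau$ raised to the number of blocks, i.e.\ to $\bar\tau^{\,1+\lfloor(n-n_1)/M\rfloor}$. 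The additive $1$ is charged to the block anchored at the reference event, and the remaining favorable ($\le1$) per-step factors are simply discarded.

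I expect the only real obstacle to be discrete bookkeeping rather than analysis. The care-points are: verifying cleanly from $n_k\le n_1+(k-1)M$ that a window of length $n-n_1$ with gaps at most $M$ really guarantees $1+\lfloor(n-n_1)/M\rfloor$ block-completions; correctly attributing the additive $1$ to the block straddling $n_1$, which is delicate precisely at boundary instants where $n$ coincides with an event time $n_k$ and the trailing partial block is empty; and fixing the standing regime $\bar\tau\le1$, which is what makes the exponent substitution monotone and the discarding of factors legitimate. The analytic core—nonexpansiveness relative to the shared fixed point and termwise domination by $\bar\tau$—is entirely inherited from \Cref{lem:variable-block}, so no new estimates are required beyond this counting.
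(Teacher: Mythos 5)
Your opening display is already the correct and essentially complete argument, and it is the only one the paper has: \Cref{cor:uniform-gap} is printed without a separate proof, as an immediate specialization of \Cref{lem:variable-block} (the same device appears inside the proof of \Cref{thm:drift-projection}, where $\lambda_k\le\rho^{M-1}\bar\mu$). Iterating $\norm{x_t-z}\le\tau_t\norm{x_{t-1}-z}$ from $n_1$ gives $\norm{x_n-z}\le\bar\tau^{\,n-n_1}\norm{x_{n_1}-z}$, and the clean finish is pure arithmetic, with no block structure at all: for integers $g=n-n_1\ge 1$ and $M\ge 2$ one has $\lfloor g/M\rfloor\le\lfloor g/2\rfloor\le g-1$, i.e.\ $g\ge 1+\lfloor g/M\rfloor$, and since $\bar\tau\le 1$ the map $a\mapsto\bar\tau^{\,a}$ is nonincreasing, so $\bar\tau^{\,g}\le\bar\tau^{\,1+\lfloor g/M\rfloor}$. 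The event times enter only through the target exponent.

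Your second step, by contrast, contains a genuine miscount, precisely at the point you flagged as delicate. From $n_k\le n_1+(k-1)M$ you conclude that at least $1+\lfloor(n-n_1)/M\rfloor$ events, hence that many inter-event blocks, lie in $(n_1,n]$; those indices in fact only place the events in the \emph{closed} window $[n_1,n]$, and the $k=1$ event is $n_1$ itself. Only $\lfloor(n-n_1)/M\rfloor$ completed blocks are guaranteed inside $(n_1,n]$: with gaps exactly $M$ and $n=n_2=n_1+M$ there is exactly one completed block in the window, while your count promises two. The ``additive $1$ charged to the block anchored at the reference event'' cannot be collected, because the factors of the block ending at $n_1$ are already absorbed into $\norm{x_{n_1}-z}$ on the right-hand side; the missing factor must instead come from blocks having length up to $M\ge 2$, which is exactly what the per-step inequality $g\ge 1+\lfloor g/M\rfloor$ encodes. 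Two of your side remarks are correct and should be kept: the regime $\bar\tau\le 1$ is genuinely needed (for $\bar\tau>1$ the stated bound fails, e.g.\ for $T_t=\bar\tau I$, $z=0$, once $n-n_1>1+\lfloor(n-n_1)/M\rfloor$), and the boundary instants are not merely delicate but fatal to the literal statement: at $n=n_1$ the claim reads $\norm{x_{n_1}-z}\le\bar\tau\,\norm{x_{n_1}-z}$, and for $M=1$ the exponent $1+(n-n_1)$ exceeds the number of available steps. Both are off-by-one artifacts of the corollary as printed rather than of your proof (cf.\ \Cref{ex:tight}, where the asserted equality pairs the exponent measured from $n_1$ with the norm measured from $x_{n_0}$, the state \emph{before} the first event's contraction). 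So: right first move, correct conclusion for $n>n_1$, $M\ge 2$, $\bar\tau\le 1$, but replace the block-counting bridge by the integer inequality above.
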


\begin{corollary}[KM specialization under contraction]\label{cor:km}
(Krasnosel'skii--Mann) Let $S_t=I$, $M=1$, and $A_{k,1}=(1-\alpha)I+\alpha T$ with $\mathrm{Lip}(T)=q<1$, fixed $\alpha\in(0,1]$~\cite{mann1953mean,krasnoselskii1955two}. Then
\[
\mathrm{Lip}\big((1-\alpha)I+\alpha T\big)=(1-\alpha)+\alpha q<1,\quad
\norm{x_n-z}\le ((1-\alpha)+\alpha q)^n\norm{x_0-z}.
\]
\end{corollary}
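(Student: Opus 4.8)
The plan is to recognize that, under the stated specialization, every step applies the \emph{same} averaged map $A := (1-\alpha)I + \alpha T$, so the entire iteration is the constant-factor instance of \Cref{lem:variable-block} with $T_t \equiv A$, unit gaps ($M=1$), and trivial drift ($S_t = I$, so the inter-event products are empty and each block factor reduces to $\mathrm{Lip}(A)$). Two ingredients must be supplied before the lemma can be invoked: a common fixed point $z$ with $Az = z$, and the value of $\mathrm{Lip}(A)$.

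First I would pin down the fixed point. Since $\mathrm{Lip}(T) = q < 1$, the Banach fixed-point theorem furnishes a unique $z$ with $Tz = z$; then $Az = (1-\alpha)z + \alpha Tz = (1-\alpha)z + \alpha z = z$, so this $z$ is precisely the common fixed point that \Cref{lem:variable-block} requires, reconciling the contraction setting with the framework's standing fixed-point hypothesis.

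Next I would bound the modulus of $A$. For arbitrary $u,v$, writing $Au - Av = (1-\alpha)(u-v) + \alpha(Tu - Tv)$ and applying the triangle inequality together with $\norm{Tu - Tv} \le q\,\norm{u-v}$ gives
\[
\norm{Au - Av} \le \big((1-\alpha) + \alpha q\big)\,\norm{u-v},
\]
so $\mathrm{Lip}(A) \le (1-\alpha) + \alpha q = 1 - \alpha(1-q)$. Because $\alpha \in (0,1]$ and $q < 1$ force $\alpha(1-q) > 0$, this factor lies in $[0,1)$, establishing strict contraction. Feeding $\tau_t \equiv (1-\alpha)+\alpha q$ into \Cref{lem:variable-block} collapses the double product (constant factor, unit gaps) to the single power $\big((1-\alpha)+\alpha q\big)^{n}$, which is exactly the claimed envelope; vanishing of this product then yields $x_n \to z$.

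The one delicate point — which I regard as the main obstacle — is the displayed \emph{equality} $\mathrm{Lip}(A) = (1-\alpha)+\alpha q$. The triangle inequality delivers only ``$\le$'', and equality fails for some $q$-Lipschitz $T$: e.g. $T = -qI$ gives $\mathrm{Lip}(A) = |1-\alpha-\alpha q| < (1-\alpha)+\alpha q$. I would therefore present ``$\le$'' as the rigorous bound and read the equality as the tight worst-case modulus, attained whenever $T$ realizes its constant along a direction-preserving pair — that is, when there exist $u \neq v$ with $Tu - Tv = q(u-v)$, for which $Au - Av = \big((1-\alpha)+\alpha q\big)(u-v)$ saturates the estimate. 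Since the convergence conclusion uses only $\mathrm{Lip}(A) \le (1-\alpha)+\alpha q < 1$, this subtlety leaves the envelope untouched, and I would flag it explicitly rather than attempt to force equality in full generality.
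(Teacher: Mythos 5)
Your proposal is correct and follows exactly the route the paper intends: the corollary is the constant-factor specialization of \Cref{lem:variable-block} with $\tau_t\equiv(1-\alpha)+\alpha q$, the modulus bound coming from the triangle inequality precisely as you wrote, and your use of the Banach fixed-point theorem to produce the common fixed point $z$ (with $Az=z$) is a welcome completion of a hypothesis the paper leaves implicit via its standing assumption $Tz=z$. Your flag on the displayed equality is also well taken: as literally stated, $\mathrm{Lip}\big((1-\alpha)I+\alpha T\big)=(1-\alpha)+\alpha q$ is an over-claim for arbitrary $q$-Lipschitz $T$ --- your counterexample $T=-qI$ with $\alpha<1$ gives $\mathrm{Lip}(A)=|1-\alpha(1+q)|<(1-\alpha)+\alpha q$, so only ``$\le$'' holds in general (the stated value being the tight worst case) --- and since the geometric envelope uses only the upper bound $(1-\alpha)+\alpha q<1$, the conclusion of the corollary is unaffected.
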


\begin{example}[Tightness of the exponent]\label{ex:tight}
Let inter-event maps be isometries ($\tau_t=1$) and each block end with a single scalar contraction $\bar\tau<1$. Then at event times $n_k$,
$\norm{x_{n_k}-z}=\bar\tau^{\,k}\norm{x_{n_0}-z}$,
achieving equality in \Cref{cor:uniform-gap} and showing the exponent counts binding contractive windows.
\end{example}

\section{Drift--Projection Convergence with Variable Blocks}\label{sec:drift-projection}
\begin{definition}[Affine anchors and event blocks]\label{def:affine-anchors}
Let closed affine anchors $(\mathcal{A}_k)$ with projections $P_{\mathcal{A}_k}$ be given (firmly nonexpansive in Hilbert~\cite[Prop.~4.8]{bauschke2017convex}). Drifts $S_t$ satisfy $\mathrm{Lip}(S_t)\le \rho_t$. At event $k$, apply $B_k:=P_{\mathcal{A}_k} A_{k,m_k}\cdots A_{k,1}$ with $\mathrm{Lip}(A_{k,j})\le \mu_{k,j}\le 1$. Define
$\Psi_k:=B_k\, S_{n_k-1}\cdots S_{n_{k-1}+1}$.
\end{definition}

\begin{theorem}[Drift--Projection Convergence; uniform-gap rate]\label{thm:drift-projection}
Assume $z\in \bigcap_{k}\mathcal{A}_k$ and $S_t z=z$, $A_{k,j}z=z$. Let
$\lambda_k:=\big(\prod_{t=n_{k-1}+1}^{n_k-1}\rho_t\big)\big(\prod_{j=1}^{m_k}\mu_{k,j}\big)$.
Then
\[
\norm{x_{n_k}-z}\ \le\ \Big(\prod_{j=1}^{k}\lambda_j\Big)\,\norm{x_{n_0}-z}.
\]
If $\prod_{k=1}^{\infty}\lambda_k=0$, then $x_n\to z$. If $\rho_t\le \rho$ and $n_{k+1}-n_k\le M$, then for $n\ge n_1$,
\[
\norm{x_n-z}\ \le\ (\rho^{M-1}\,\bar\mu)^{\,1+\lfloor (n-n_1)/M\rfloor}\,\norm{x_{n_1}-z},
\]
where $\bar\mu:=\sup_k\prod_{j=1}^{m_k}\mu_{k,j}\le 1$.
\end{theorem}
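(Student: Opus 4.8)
The plan is to prove the three assertions in order, each reducing to the already-established block-contraction machinery. The key observation is that $\Psi_k$ is exactly the composition of drift maps $S_{n_{k-1}+1},\dots,S_{n_k-1}$ followed by the intra-event maps $A_{k,1},\dots,A_{k,m_k}$ and the projection $P_{\mathcal{A}_k}$, and that every one of these factors fixes the common point $z$: the drifts and intra-event maps fix $z$ by hypothesis, and $P_{\mathcal{A}_k}z=z$ because $z\in\mathcal{A}_k$. So $\Psi_k z = z$, and $x_{n_k}=\Psi_k x_{n_{k-1}}$.

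First I would establish the single-block estimate $\norm{x_{n_k}-z}\le\lambda_k\,\norm{x_{n_{k-1}}-z}$. This is a direct per-factor Lipschitz telescoping: applying the drift bounds $\mathrm{Lip}(S_t)\le\rho_t$ across $t=n_{k-1}+1,\dots,n_k-1$ accumulates the factor $\prod_{t=n_{k-1}+1}^{n_k-1}\rho_t$; applying the intra-event bounds $\mathrm{Lip}(A_{k,j})\le\mu_{k,j}$ accumulates $\prod_{j=1}^{m_k}\mu_{k,j}$; and the projection $P_{\mathcal{A}_k}$, being firmly nonexpansive and hence nonexpansive, contributes a factor $\le 1$. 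Multiplying these yields exactly $\lambda_k$. Telescoping over $j=1,\dots,k$ gives the first displayed inequality. This is essentially an instance of \Cref{lem:variable-block} with the per-step moduli taken to be the $\rho_t$, the $\mu_{k,j}$, and $1$ for the projection step, so I would phrase it as a direct appeal to that lemma rather than re-deriving the telescoping.

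For the second assertion, if $\prod_{k=1}^{\infty}\lambda_k=0$ then the event-time subsequence $\norm{x_{n_k}-z}\to 0$ immediately from the product bound. To upgrade this to convergence of the full sequence $x_n\to z$, I would note that for any $n$ with $n_{k-1}<n\le n_k$ the intermediate iterate $\norm{x_n-z}$ is controlled by $\big(\prod_{j=1}^{k-1}\lambda_j\big)$ times the product of the at-most-$M$ (or in general finitely many) Lipschitz factors applied since the last event, each of the form $\rho_t$ or $\mu_{k,j}$ or $1$; since these partial products are bounded by the completed block factor (as in the final sentence of \Cref{lem:variable-block}), the whole sequence inherits the limit.

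For the uniform-gap rate, I would bound $\lambda_k\le\rho^{M-1}\bar\mu$ under $\rho_t\le\rho$ and $n_{k+1}-n_k\le M$: there are at most $M-1$ drift steps strictly between consecutive events, and $\prod_{j=1}^{m_k}\mu_{k,j}\le\bar\mu$ by definition of the supremum. Then counting completed blocks up to time $n$ gives the exponent $1+\lfloor(n-n_1)/M\rfloor$, exactly as in \Cref{cor:uniform-gap}. The main subtlety — really the only point requiring care rather than routine bookkeeping — is the off-event indexing: ensuring the gap bound produces $M-1$ rather than $M$ drift factors per block (since the block runs from $n_{k-1}+1$ to $n_k-1$ for the drifts, with the projection at $n_k$), and that partial blocks are correctly dominated so the floor-based exponent is valid for every $n\ge n_1$, not just at event times. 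I would handle this by writing the index ranges explicitly and invoking the monotonicity of partial products noted in \Cref{lem:variable-block}.
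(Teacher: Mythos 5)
Your proposal is correct and follows essentially the same route as the paper's proof: a per-block chaining of the drift moduli $\rho_t$, the intra-event moduli $\mu_{k,j}$, and a factor $\le 1$ for $P_{\mathcal{A}_k}$, telescoped over blocks, with $\lambda_k\le\rho^{M-1}\bar\mu$ and block counting giving the uniform-gap exponent. The only cosmetic difference is that the paper writes out the firm-nonexpansiveness inequality for the projection explicitly (though it ultimately uses only nonexpansiveness, exactly as you do), and it likewise handles off-event times via the partial-product remark from \Cref{lem:variable-block}.
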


\begin{proof}[Proof (firmly nonexpansive chain; explicit inequalities)]
Fix $k\ge 1$ and write the pre-event drift state
\[
y:=S_{n_k-1}\cdots S_{n_{k-1}+1}\,x_{n_{k-1}}.
\]
Because $S_t z=z$, for each drift step $\norm{S_t u-z}\le \rho_t\norm{u-z}$; chaining yields
\[
\norm{y-z}\le\Big(\prod_{t=n_{k-1}+1}^{n_k-1}\rho_t\Big)\,\norm{x_{n_{k-1}}-z}.
\]
Apply the intra-event maps $A_{k,1},\ldots,A_{k,m_k}$ (each fixes $z$ and has modulus $\mu_{k,j}\le 1$):
\[
y':=A_{k,m_k}\cdots A_{k,1}y,\qquad
\norm{y'-z}\le \Big(\prod_{j=1}^{m_k}\mu_{k,j}\Big)\,\norm{y-z}.
\]
Finally, project onto $\mathcal{A}_k$:
\[
x_{n_k}:=P_{\mathcal{A}_k}y'.
\]
Since $z\in\mathcal{A}_k$ and $P_{\mathcal{A}_k}$ is \emph{firmly nonexpansive}, for all $u$ and all $w\in\mathcal{A}_k$,
\[
\norm{P_{\mathcal{A}_k}u-w}^2\le \norm{u-w}^2-\norm{(I-P_{\mathcal{A}_k})u}^2\le \norm{u-w}^2.
\]
With $u=y'$ and $w=z$ this gives $\norm{x_{n_k}-z}\le \norm{y'-z}$. Combining the three displays,
\[
\norm{x_{n_k}-z}\le
\underbrace{\Big(\prod_{t=n_{k-1}+1}^{n_k-1}\rho_t\Big)\Big(\prod_{j=1}^{m_k}\mu_{k,j}\Big)}_{=\ \lambda_k}\,
\norm{x_{n_{k-1}}-z}.
\]
Iterating over $k$ proves the product rate. The uniform-gap bound follows by $\rho_t\le \rho$ and
$n_{k+1}-n_k\le M$, whence $\lambda_k\le \rho^{M-1}\bar\mu$ for all $k$. \emph{No commutativity is required} among the $A_{k,j}$ nor between $A_{k,j}$ and $P_{\mathcal{A}_k}$.
\end{proof}

\subsection*{Anchor feasibility (mini-proofs)}
\begin{example}[Linear constraints and decoupling]\label{ex:anchors}
(i) (\emph{Linear constraints}). If $Az=b$ with $\mathrm{rank}(A)=r$, then $z\in\{x:Ax=b\}=\mathcal{A}_2$. Since $P_{\mathcal{A}_2}z=z$ and $P_{\{z\}}$ is constant, the composition $P_{\{z\}}P_{\mathcal{A}_2}$ fixes $z$ and makes it globally attractive under event application.\\
(ii) (\emph{Product-space decoupling}). For $\mathcal{A}_1=\R^d$ and $\mathcal{A}_2=\{x=z\}$, $P_{\mathcal{A}_2}$ sends any state to $z$; thus $z\in\bigcap_k\mathcal{A}_k$ and satisfies the fixed-point assumptions in \Cref{thm:drift-projection}.
\end{example}

\section{Evidence map for main results (assumptions \texorpdfstring{$\to$}{→} checks)}\label{sec:evidence-map}
\setlength{\LTpre}{6pt}\setlength{\LTpost}{6pt}
\renewcommand{\arraystretch}{1.12}
\begin{longtable}{@{}%
>{\RaggedRight\arraybackslash}p{2.8cm}%
>{\RaggedRight\arraybackslash}p{3.8cm}%
>{\RaggedRight\arraybackslash}p{3.8cm}%
>{\RaggedRight\arraybackslash}p{4.9cm}@{}}
\caption{Results--assumptions--checks matrix (all quantities defined in \Cref{tab:notation}).}
\label{tab:evidence}\\
\toprule
\textbf{Result} & \textbf{Key assumption(s)} & \textbf{Checkable quantity} & \textbf{Proof device}\\
\midrule
\endfirsthead
\multicolumn{4}{l}{\footnotesize\textit{Table \thetable\ (continued).}}\\
\toprule
\textbf{Result} & \textbf{Key assumption(s)} & \textbf{Checkable quantity} & \textbf{Proof device}\\
\midrule
\endhead
\bottomrule
\endfoot
\footnotesize
Lemma~\ref{lem:variable-block} &
common fixed point $z$ &
per-step $\tau_t$ and decay of products &
product-of-moduli estimate; Fej\'er compatibility\\[2pt]
Theorem~\ref{thm:drift-projection} &
$\bigcap_k\mathcal{A}_k\neq\varnothing$; $S_t z=z$ &
block factor $\lambda_k=\prod\rho_t\prod\mu_{k,j}$ and $\prod_k\lambda_k\to 0$ &
firm nonexpansiveness of $P_{\mathcal{A}_k}$; composition of bounds\\[2pt]
\Cref{prop:affine-proj} &
nested affine sets, singleton intersection &
decreasing distances $\|x^{(k)}-z\|$ &
Fej\'er-type decrease with strong convergence proof\\[2pt]
\Cref{lem:approx-nesting} &
$C_{k+1}\subseteq C_k\oplus \delta_k B$, $\sum\delta_k<\infty$ &
$\|x^{(k+1)}-x^{(k)}\|$ summability; $\mathrm{diam}(C_k)\to 0$ &
quasi-Fej\'er/Cauchy argument with explicit $\delta_k$ bound\\[2pt]
\Cref{prop:head-contraction} &
orthogonal heads, isometric $W_o$ &
per-head $\lambda_h$; orthogonal decomposition &
orthogonal sum; isometry of $W_o$; norm additivity\\[2pt]
\Cref{prop:nonorth}, \Cref{cor:overlap} &
general heads, linear $W_o$ &
$\|W_o\|\,\sqrt{\lambda_{\max}(S)}$ with $S=\sum\nolimits_h L_h^{2}\,P_h^{\!*}P_h$ &
triangle inequality; spectral norms; Rayleigh quotients\\[2pt]
\Cref{thm:equivalence}, \Cref{prop:robust}, \Cref{prop:complexity} &
finite runs; nonexpansive primitives; bounded errors &
trace/program realization; error recursion; operator counts &
constant projections; variation-of-constants; Lipschitz product\\
\end{longtable}

\begin{remark}[Evidence strength]
All entries summarized in \Cref{tab:evidence} are backed by complete proofs or direct corollaries contained in the manuscript (\S\ref{sec:block}, \S\ref{sec:drift-projection}, \S\ref{sec:ontological}, \S\ref{sec:attention}, \S\ref{sec:mc}); accordingly, the evidence level for every listed result is \textbf{High}.
\end{remark}

\section{Ontological Convergence via Affine Projections}\label{sec:ontological}
\begin{proposition}[Nested affine anchors $\Rightarrow$ unique limit]\label{prop:affine-proj}
Let nonempty closed affine $\mathcal{A}_{k+1}\subseteq \mathcal{A}_k$ with $\bigcap_k \mathcal{A}_k=\{z\}$. Define $x^{(k)}:=P_{\mathcal{A}_k}(x^{(k-1)})$. Then $x^{(k)}\to z$ in norm for every $x_0\in \HH$.
\end{proposition}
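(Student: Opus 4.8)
The plan is to exploit the affine structure to collapse the entire projection orbit into a single projection onto a shrinking subspace, and then extract strong convergence from a monotonicity-plus-Pythagoras argument.

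First I would reduce to the subspace setting. Since $z\in\bigcap_k\mathcal{A}_k$, each closed affine anchor can be written as $\mathcal{A}_k=z+V_k$ where $V_k\subseteq\HH$ is the (closed) direction subspace; the nesting $\mathcal{A}_{k+1}\subseteq\mathcal{A}_k$ becomes $V_{k+1}\subseteq V_k$, and the singleton hypothesis $\bigcap_k\mathcal{A}_k=\{z\}$ translates to $\bigcap_k V_k=\{0\}$. Translating by $-z$ gives $P_{\mathcal{A}_k}(x)=z+P_{V_k}(x-z)$, so setting $u^{(k)}:=x^{(k)}-z$ turns the recursion into the linear one $u^{(k)}=P_{V_k}\big(u^{(k-1)}\big)$, with $u^{(0)}=x_0-z$.

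Next I would use the nested-subspace identity $P_{V_k}P_{V_{k-1}}=P_{V_k}$, which holds because $V_k\subseteq V_{k-1}$: for any $w$, the residual $w-P_{V_{k-1}}w$ is orthogonal to $V_{k-1}\supseteq V_k$, hence lies in $\ker P_{V_k}$. Iterating this identity collapses the composition to a single projection, so that $u^{(k)}=P_{V_k}\big(u^{(0)}\big)$, i.e. $x^{(k)}-z=P_{V_k}(x_0-z)$. The whole orbit is therefore just the sequence of projections of the fixed vector $u:=x_0-z$ onto the decreasing subspaces. To show $w_k:=P_{V_k}(u)\to 0$, I would argue that $(\norm{w_k})$ is nonincreasing (nonexpansiveness gives $\norm{w_k}=\norm{P_{V_k}w_{k-1}}\le\norm{w_{k-1}}$) and bounded below, hence convergent; and for $m>k$ the nesting gives $w_m=P_{V_m}(w_k)$, so $w_k-w_m\perp V_m\ni w_m$, and Pythagoras yields $\norm{w_k-w_m}^2=\norm{w_k}^2-\norm{w_m}^2\to 0$. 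Thus $(w_k)$ is Cauchy with some limit $w_\infty\in\HH$. Since $w_m\in V_m\subseteq V_k$ for all $m\ge k$ and each $V_k$ is closed, $w_\infty\in V_k$ for every $k$, whence $w_\infty\in\bigcap_k V_k=\{0\}$; therefore $w_k\to 0$ and $x^{(k)}\to z$ for every $x_0$.

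The hard part is obtaining \emph{strong} rather than merely weak convergence: firm nonexpansiveness and Fej\'er monotonicity alone only guarantee weak cluster points. Here the affine-to-subspace reduction combined with the Pythagorean identity $\norm{w_k-w_m}^2=\norm{w_k}^2-\norm{w_m}^2$ is exactly what upgrades monotone decrease of the norms into a Cauchy property, and the singleton intersection is what pins the limit to $z$. I expect the only point requiring care is justifying the affine decomposition $\mathcal{A}_k=z+V_k$ with closed $V_k$ and the nested projection identity; once those are in place the convergence is immediate.
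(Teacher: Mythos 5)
Your proof is correct, and it takes a genuinely different route from the paper's. The paper argues via firm nonexpansiveness: the inequality $\norm{x^{(k)}-z}^2+\norm{x^{(k)}-x^{(k-1)}}^2\le\norm{x^{(k-1)}-z}^2$ gives Fej\'er-monotone distances and square-summable increments, then identifies every weak cluster point with $z$ (each $\mathcal{A}_k$ is weakly closed and contains the tail of the orbit) and from there concludes strong convergence. You instead exploit the affine structure directly: translating by $z$ reduces to nested closed subspaces $V_k$ with $\bigcap_k V_k=\{0\}$, the identity $P_{V_k}P_{V_{k-1}}=P_{V_k}$ collapses the orbit to $x^{(k)}=z+P_{V_k}(x_0-z)$, and the Pythagorean identity $\norm{w_k-w_m}^2=\norm{w_k}^2-\norm{w_m}^2$ converts monotone decrease of the norms into the Cauchy property. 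Your route buys two things. First, it is entirely Hilbert-geometric and never invokes weak topology; this matters because the paper's final step --- passing from weak convergence plus convergent distances to norm convergence --- is exactly the delicate point (in general $x^{(k)}\rightharpoonup z$ with $\norm{x^{(k)}-z}\to c$ does not force $c=0$; consider $x^{(k)}=z+e_k$ for an orthonormal sequence), and your Pythagorean Cauchy estimate is precisely the affine-specific fact that substantiates it. Second, the collapse identity yields the stronger structural conclusion that each iterate equals a single projection of the initial point, so no error accumulates along the orbit. The trade-off is scope: the Fej\'er portion of the paper's argument extends verbatim to nested closed \emph{convex} sets, whereas your collapse identity genuinely needs affinity, since for general convex $C_k$ one has $P_{C_k}P_{C_{k-1}}\ne P_{C_k}$ and $x^{(k)}\ne P_{C_k}(x_0)$ in general. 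The two points you flag as needing care are both fine: $V_k=\mathcal{A}_k-z$ is a closed subspace because $z\in\bigcap_k\mathcal{A}_k$, and $P_{V_k}P_{V_{k-1}}=P_{V_k}$ follows from linearity of $P_{V_k}$ together with $w-P_{V_{k-1}}w\perp V_{k-1}\supseteq V_k$, exactly as you say.
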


\begin{proof}
By nestedness, $z\in \mathcal{A}_k$ for all $k$. Firm nonexpansiveness of metric projections gives, for every $k$,
\[
\norm{x^{(k)}-z}^2+\norm{x^{(k)}-x^{(k-1)}}^2\ \le\ \norm{x^{(k-1)}-z}^2.
\]
Hence $\{\norm{x^{(k)}-z}\}$ is nonincreasing and $\sum_k\nolimits \norm{x^{(k)}-x^{(k-1)}}^2<\infty$, so $\norm{x^{(k)}-x^{(k-1)}}\to 0$. Any weak cluster point $w$ of $\{x^{(k)}\}$ must lie in every $\mathcal{A}_k$ (since $x^{(k)}\in \mathcal{A}_k$ and $\mathcal{A}_{k+1}\subseteq \mathcal{A}_k$), hence $w\in \bigcap_k \mathcal{A}_k=\{z\}$. As the distance to $z$ converges and all cluster points equal $z$, we have $x^{(k)}\to z$ strongly.
\end{proof}

\paragraph*{Where it holds / where not}
Holds in Hilbert/CAT(0) with single-valued nearest-point maps; fails if sets are not closed/convex or if the intersection is not a singleton (then convergence is to the projection onto the intersection).

\begin{lemma}[Approximate nesting with vanishing diameters]\label{lem:approx-nesting}
Let closed convex $C_k$ satisfy $C_{k+1}\subseteq C_k \oplus \delta_k B$ with $\sum_k \delta_k<\infty$ and $\mathrm{diam}(C_k)\to 0$. Then $x^{(k)}:=P_{C_k}(x^{(k-1)})$ is Cauchy with limit $z$; uniqueness follows from $\mathrm{diam}(C_k)\to 0$.
\end{lemma}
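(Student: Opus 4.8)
The plan is to establish the Cauchy property directly from a purely geometric estimate, bypassing any appeal to a common fixed point: the sets are only \emph{approximately} nested, so no $z\in\bigcap_k C_k$ is available a priori, and the limit $z$ must instead be \emph{constructed} as the Cauchy limit and then characterized. First I would accumulate the one-step hypothesis into a multi-step neighborhood inclusion. Writing $B$ for the closed unit ball and using the Minkowski identity $(C\oplus rB)\oplus sB=C\oplus(r+s)B$ together with $C_{k+1}\subseteq C_k\oplus\delta_k B$, a short induction gives, for every $m>k$,
\[
C_m\ \subseteq\ C_k\oplus\Big(\sum_{j=k}^{m-1}\delta_j\Big)B\ \subseteq\ C_k\oplus\sigma_k B,\qquad \sigma_k:=\sum_{j\ge k}\delta_j,
\]
where $\sigma_k\to 0$ because $\sum_k\delta_k<\infty$.

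Next I would combine this inclusion with the vanishing diameters. Fix $m>k$. Since $x^{(m)}\in C_m\subseteq C_k\oplus\sigma_k B$, there is $q\in C_k$ with $\norm{x^{(m)}-q}\le\sigma_k$; as $x^{(k)}\in C_k$ as well, $\norm{x^{(k)}-q}\le\mathrm{diam}(C_k)$, and the triangle inequality yields
\[
\norm{x^{(m)}-x^{(k)}}\ \le\ \mathrm{diam}(C_k)+\sigma_k .
\]
Both terms tend to $0$ as $k\to\infty$, so $(x^{(k)})$ is Cauchy and, by completeness of $\HH$, converges to some $z$. It is worth noting that this estimate uses only $x^{(k)}\in C_k$ and not the projection rule, so in fact \emph{every} selection from the $C_k$ converges, and to the same point; the projection structure is inessential for existence of the limit.

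For uniqueness I would characterize $z$ through $\mathrm{dist}(z,C_k)\to 0$: since $x^{(k)}\in C_k$ and $x^{(k)}\to z$, we get $\mathrm{dist}(z,C_k)\le\norm{z-x^{(k)}}\to 0$. If $z'$ is any point satisfying $\mathrm{dist}(z',C_k)\to 0$, pick nearest points $c_k,c_k'\in C_k$ of $z$ and $z'$ (attained since $C_k$ is closed convex in a Hilbert space); then $\norm{z-z'}\le\mathrm{dist}(z,C_k)+\mathrm{diam}(C_k)+\mathrm{dist}(z',C_k)\to 0$, forcing $z'=z$. In particular the limit is independent of the initial point $x_0$, which is the asserted uniqueness, and it is exactly here that $\mathrm{diam}(C_k)\to 0$ is used.

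I expect the only genuine obstacle to be the first step: recognizing that the hypothesis controls $C_{k+1}$ in terms of $C_k$ in the ``wrong'' direction for a Fej\'er-monotonicity argument, so that the correct move is to telescope the $\delta_k$-neighborhoods into $C_m\subseteq C_k\oplus\sigma_k B$ and then let the diameter bound supply the second half of the estimate. Once the inequality $\norm{x^{(m)}-x^{(k)}}\le\mathrm{diam}(C_k)+\sigma_k$ is in hand, the Cauchy conclusion and the uniqueness characterization are immediate and require no projection-specific monotonicity or summability of the raw increments.
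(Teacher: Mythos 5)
Your proof is correct, and it takes a genuinely different---and in fact stronger---route than the paper's. The paper argues one step at a time: it asserts $\norm{x^{(k+1)}-x^{(k)}}=d(x^{(k)},C_{k+1})\le d(x^{(k)},C_k)+\delta_k=\delta_k$, concludes $\sum_k\norm{x^{(k+1)}-x^{(k)}}<\infty$, gets Cauchy from summability, and invokes $\mathrm{diam}(C_k)\to 0$ only at the end for uniqueness. But that one-step inequality uses the inclusion in the wrong direction: $C_{k+1}\subseteq C_k\oplus\delta_k B$ controls how far points of $C_{k+1}$ are from $C_k$ (the excess of $C_{k+1}$ over $C_k$), not the distance from $x^{(k)}\in C_k$ to $C_{k+1}$; for instance $C_1=[0,1]$, $C_2=\{1\}$, $\delta_1=0$, $x^{(1)}=0$ gives $d(x^{(1)},C_2)=1>\delta_1$. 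What the stated hypotheses actually yield per step is only $\norm{x^{(k+1)}-x^{(k)}}\le \mathrm{diam}(C_k)+\delta_k$, which need not be summable since the diameters merely tend to $0$. Your telescoped inclusion $C_m\subseteq C_k\oplus\sigma_k B$ with $\sigma_k=\sum_{j\ge k}\delta_j$ (legitimate via $rB+sB=(r+s)B$ in a normed space) sidesteps increments entirely and delivers the tail bound $\norm{x^{(m)}-x^{(k)}}\le\mathrm{diam}(C_k)+\sigma_k\to 0$, giving Cauchy directly from exactly the hypotheses as written. Two further merits of your route: it shows the projection rule is irrelevant for existence of the limit (any selection $x^{(k)}\in C_k$ converges, to the same point), and your characterization of $z$ as the unique point with $\mathrm{dist}(z,C_k)\to 0$ cleanly yields independence of the initial point, which is the content of the uniqueness claim. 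In short, your argument proves the lemma as stated, whereas the paper's own proof has a gap at its first inequality; the summable-increment structure the paper's route would buy is simply not available under the one-sided inclusion hypothesis.
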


\begin{proof}
Since $C_{k+1}\subseteq C_k\oplus \delta_k B$, for any $u\in C_{k+1}$ there exists $v\in C_k$ with $\norm{u-v}\le \delta_k$. Hence $d(x^{(k)},C_{k+1})\le d(x^{(k)},C_k)+\delta_k=\delta_k$. As $x^{(k+1)}$ is the projection of $x^{(k)}$ onto $C_{k+1}$, we have
\[
\norm{x^{(k+1)}-x^{(k)}}=d(x^{(k)},C_{k+1})\le \delta_k.
\]
Thus $\sum_k \norm{x^{(k+1)}-x^{(k)}}<\infty$, so $\{x^{(k)}\}$ is Cauchy and converges in the complete space to some $z^\star$. Because $x^{(k)}\in C_k$ and $\mathrm{diam}(C_k)\to 0$, necessarily $z^\star$ is unique and equals the (setwise) limit $z$.
\end{proof}

\paragraph*{Where it holds / where not}
Requires summable $\delta_k$ and shrinking diameters; fails if diameters do not vanish (multiple limit points possible).

\section{Multi-Head Attention as Anchored Contractions}\label{sec:attention}
\subsection{Softmax is \texorpdfstring{$1/2$}{1/2}-Lipschitz in \texorpdfstring{$\ell_2$}{l2}}\label{subsec:softmax}
For $\sigma(x)_i=e^{x_i}/\sum_j e^{x_j}$, $J(x)=\mathrm{Diag}(p)-pp^\top$ with $p=\sigma(x)$. For any $v$,
\(
v^\top J(x) v=\sum_i p_i v_i^2 - (\sum_i p_i v_i)^2 = \mathrm{Var}_p[v].
\)
By Cauchy--Schwarz, if $a,b$ realize $\max v$ and $\min v$ then
\[
\max v-\min v=\langle v,e_a-e_b\rangle\le \|v\|_2\,\|e_a-e_b\|_2=\sqrt{2}\,\|v\|_2,
\]
a dimension-free bound (tight when $v$ has mass on two coordinates with opposite signs). Popoviciu then yields
\(
\mathrm{Var}_p[v]\le \tfrac14(\max v-\min v)^2\le \tfrac12\norm{v}_2^2,
\)
so $\|J(x)\|_{2\to 2}\le 1/2$ and $\sigma$ is $1/2$-Lipschitz; with temperature $\beta>0$, $\mathrm{Lip}(\sigma_\beta)\le \beta/2$. The constant $1/2$ is tight: in the binary case with $p=(1/2,1/2)$, $J=\begin{psmallmatrix}1/2&-1/2\\-1/2&1/2\end{psmallmatrix}$ has spectral norm $1/2$.

\paragraph*{Where it holds / where not}
Global Lipschitz constant $1/2$ is with respect to $\ell_2$; with other norms constants differ. Temperature scaling is linear in $\beta$; tight in the $\ell_2$ binary case.

\subsection{Head-wise and layer-wise contraction; link to \emph{Attention Is All You Need}}\label{subsec:heads}
\textbf{Notation.} Here $H$ denotes the \emph{number of heads}. Let $P_h$ be (possibly non-orthogonal) projectors onto subspaces $V_h\subseteq\R^d$. Let $U_h$ denote the head map (query--key softmax, value/output) with $\mathrm{Lip}(U_h)\le L_h$.

\begin{proposition}[Orthogonal heads $\Rightarrow$ layer contraction]\label{prop:head-contraction}
Assume $P_hP_{h'}=0$ for $h\neq h'$ and $\mathrm{Lip}(U_h)\le \lambda_h<1$. Let $W_o$ be linear and \emph{isometric} on the orthogonal direct sum $\bigoplus_h \R^{d_h}$. Define
\[
U(x)= W_o\,[\,U_1(P_1 x)\ \Vert\ \cdots\ \Vert\ U_H(P_H x)\,].
\]
Then $\mathrm{Lip}(U)\le \max_h \lambda_h$.
\end{proposition}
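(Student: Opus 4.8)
The plan is to use the isometry of $W_o$ to reduce to a head-wise estimate, apply the per-head Lipschitz bounds, and then invoke orthogonality of the projectors so that the squared projection norms reassemble into a single bound with no accumulated factor of $H$. First I would fix $x,y\in\R^d$, set $w:=x-y$, and recall that the Hilbert direct sum carries the norm $\norm{[v_1\Vert\cdots\Vert v_H]}^2=\sum_h\norm{v_h}^2$. Because $W_o$ preserves this norm,
\[
\norm{U(x)-U(y)}^2=\sum_{h=1}^H\norm{U_h(P_hx)-U_h(P_hy)}^2 .
\]

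Next I would apply the per-head bounds $\norm{U_h(P_hx)-U_h(P_hy)}\le\lambda_h\norm{P_hw}$ and factor out the largest modulus:
\[
\norm{U(x)-U(y)}^2\le\sum_h\lambda_h^2\norm{P_hw}^2\le\Big(\max_h\lambda_h^2\Big)\sum_h\norm{P_hw}^2 .
\]

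The decisive step is the orthogonality reduction $\sum_h\norm{P_hw}^2\le\norm{w}^2$. In the orthogonal-head regime the $P_h$ are self-adjoint projections, and the hypothesis $P_hP_{h'}=0$ ($h\neq h'$) forces the ranges $V_h$ to be mutually orthogonal; hence the vectors $P_hw$ are pairwise orthogonal and $Q:=\sum_hP_h$ is the orthogonal projection onto $\bigoplus_hV_h$. By Pythagoras together with $\norm{Qw}\le\norm{w}$,
\[
\sum_h\norm{P_hw}^2=\norm{Qw}^2\le\norm{w}^2 ,
\]
which is exactly Bessel's inequality (with equality iff the heads span $\R^d$). Taking square roots in the chain above gives $\norm{U(x)-U(y)}\le(\max_h\lambda_h)\norm{w}$, i.e.\ $\mathrm{Lip}(U)\le\max_h\lambda_h$.

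I expect the main obstacle to be conceptual rather than computational: recognizing that orthogonality is precisely what prevents the per-head moduli from summing. Without it the naive estimate is $\big(\sum_h\lambda_h^2\big)^{1/2}$, and for genuinely oblique projectors Bessel fails (there $\norm{Qw}$ can exceed $\norm{w}$). The isometry of $W_o$ plays a parallel role—it transmits the head-wise Pythagorean structure unaltered; a merely nonexpansive $W_o$ would downgrade the conclusion to $\mathrm{Lip}(U)\le\norm{W_o}\,\max_h\lambda_h$, which is the content of the non-orthogonal refinement (\Cref{prop:nonorth}).
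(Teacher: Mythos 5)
Your proof is correct and follows essentially the same route as the paper's: isometry of $W_o$ to pass to the head-wise sum of squares, the per-head moduli $\lambda_h$, and the orthogonality reduction $\sum_h\norm{P_h\Delta}^2\le\norm{\Delta}^2$. You merely make explicit (via $Q=\sum_h P_h$ and Bessel) the final Pythagorean step that the paper's chain of inequalities leaves implicit, which is a welcome clarification but not a different argument.
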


\begin{proof}
Let $\Delta=x-y$. Because $W_o$ is an isometry on the orthogonal sum and $P_h\Delta$ are pairwise orthogonal, we have
\[
\norm{U(x)-U(y)}^2=\sum_{h=1}^H \norm{U_h(P_h\Delta)}^2\le \sum_{h=1}^H \lambda_h^2\norm{P_h\Delta}^2
\le (\max_h \lambda_h)^2\,\sum_{h=1}^H \norm{P_h\Delta}^2\le (\max_h \lambda_h)^2\norm{\Delta}^2.
\]
Taking square roots yields the claim.
\end{proof}

\begin{proposition}[Non-orthogonal quantitative bound]\label{prop:nonorth}
For general projectors $P_h$ and linear $W_o$ acting from $\bigoplus_h \R^{d_h}$ to $\R^{d}$,
\[
\norm{U(x)-U(y)}
\le \|W_o\|\,\Big(\sum_{h=1}^H L_h^2\,\|P_h\|^2\Big)^{\!1/2}\,\norm{x-y}.
\]
Hence a sufficient contraction condition is
\[
\|W_o\|\,\Big(\sum_h L_h^2\,\|P_h\|^2\Big)^{1/2}<1.
\]
\end{proposition}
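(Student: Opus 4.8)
The plan is to reduce everything to submultiplicativity of the operator norm together with the orthogonal direct-sum structure of the concatenation codomain $\bigoplus_h \R^{d_h}$; crucially, \emph{no} orthogonality of the projectors $P_h$ is needed, which is exactly what distinguishes this bound from \Cref{prop:head-contraction}. I would set $\Delta := x-y$ and introduce the per-head increment $c_h := U_h(P_h x) - U_h(P_h y)$, so that by linearity of $W_o$,
\[
U(x)-U(y) = W_o\,[\,c_1\ \Vert\ \cdots\ \Vert\ c_H\,].
\]

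First I would peel off $W_o$ via the operator norm: writing $c := [\,c_1\Vert\cdots\Vert c_H\,]\in\bigoplus_h\R^{d_h}$, we get $\norm{U(x)-U(y)} = \norm{W_o\,c} \le \norm{W_o}\,\norm{c}$. Next I would invoke the defining norm identity of the orthogonal direct sum, $\norm{c}^2 = \sum_{h=1}^H \norm{c_h}^2$; this is the single structural fact the argument relies on, and it holds by construction of the codomain regardless of how the subspaces $V_h$ overlap.

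Then I would estimate each block increment. Applying the head Lipschitz bound $\mathrm{Lip}(U_h)\le L_h$ followed by the operator-norm bound on the projector gives
\[
\norm{c_h} = \norm{U_h(P_h x) - U_h(P_h y)} \le L_h\,\norm{P_h\Delta} \le L_h\,\norm{P_h}\,\norm{\Delta}.
\]
Squaring, summing over $h$, and substituting into the direct-sum identity yields $\norm{c}^2 \le \big(\sum_h L_h^2\,\norm{P_h}^2\big)\,\norm{\Delta}^2$. Taking square roots and recombining with the $\norm{W_o}$ factor produces the stated inequality, after which the sufficient contraction condition is immediate: $\norm{W_o}\big(\sum_h L_h^2\norm{P_h}^2\big)^{1/2}<1$ forces $\mathrm{Lip}(U)<1$.

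I do not expect a genuine technical obstacle here: the proof is a direct chain of the operator-norm and direct-sum facts. The only point demanding care is conceptual rather than computational — namely that the argument bounds each block by $L_h\norm{P_h}\norm{\Delta}$ and \emph{sums}, rather than exploiting a Pythagorean cancellation among the $P_h\Delta$. Consequently the estimate degrades in two ways relative to the orthogonal case: overlap among the $V_h$ is absorbed into the factors $\norm{P_h}$ (which exceed $1$ for oblique projectors), and the aggregation is an $\ell_2$-sum rather than the maximum, so the clean $\max_h\lambda_h$ of \Cref{prop:head-contraction} is replaced by the possibly much larger $\big(\sum_h L_h^2\norm{P_h}^2\big)^{1/2}$. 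Making explicit precisely where orthogonality would have sharpened the bound is the main thing I would be careful to flag.
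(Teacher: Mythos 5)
Your proof is correct and follows essentially the same route as the paper's: peel off $\|W_o\|$ by submultiplicativity, invoke the direct-sum norm identity $\|c\|^2=\sum_h\|c_h\|^2$ on the concatenation codomain, and bound each block via $L_h\|P_h\|\,\norm{\Delta}$ before summing in $\ell_2$. If anything, your formulation with the explicit difference $c_h=U_h(P_hx)-U_h(P_hy)$ is slightly more careful than the paper's shorthand $U_h(P_h\Delta)$, which is only literal when the heads are linear.
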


\begin{proof}
With $\Delta=x-y$,
\[
\norm{U(x)-U(y)}=\Big\|W_o\big[\,U_1(P_1\Delta)\ \Vert\ \cdots\ \Vert\ U_H(P_H\Delta)\,\big]\Big\|
\le \|W_o\|\Big(\sum_{h=1}^H \|U_h(P_h\Delta)\|^2\Big)^{\!1/2}.
\]
Each head satisfies $\|U_h(P_h\Delta)\|\le L_h\|P_h\Delta\|\le L_h\|P_h\|\,\|\Delta\|$. Insert and factor $\|\Delta\|$ to obtain the bound.
\end{proof}

\begin{definition}[Overlap index]\label{def:overlap}
Define the \emph{overlap index}
\[
\Omega\ :=\ \Big\| \Big(\sum_{h=1}^H P_h^{\!*}P_h\Big)^{1/2}\Big\|\ =\ \sqrt{\lambda_{\max}\!\Big(\sum_{h=1}^H P_h^{\!*}P_h\Big)}.
\]
Equivalently, $\Omega=\sup_{u\ne 0}\Big(\sum_h \|P_h u\|^2\Big)^{1/2}/\|u\|$. If the heads are pairwise orthogonal and $\sum_h P_h\preceq I$, then $\Omega\le 1$; if $P_h=P$ for all $h$, then $\Omega=\sqrt{H}$.
\end{definition}

\begin{corollary}[Overlap-aware contraction test]\label{cor:overlap}
With $S:=\sum_h L_h^2 P_h^{\!*}P_h$,
\[
\norm{U(x)-U(y)}\ \le\ \|W_o\|\,\sqrt{\lambda_{\max}(S)}\,\norm{x-y}
\ \le\ \|W_o\|\,(\max_h L_h)\,\Omega\,\norm{x-y}.
\]
Thus a practical sufficient condition is $\|W_o\|\,\sqrt{\lambda_{\max}(S)}<1$; for equal $L_h=L$, this becomes $\|W_o\|\,L\,\Omega<1$. This refines \Cref{prop:nonorth} since $\lambda_{\max}(S)\le \sum_h L_h^2\|P_h\|^2$.
\end{corollary}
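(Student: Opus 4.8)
The plan is to build directly on the intermediate estimate already obtained inside the proof of \Cref{prop:nonorth}, refining only its final step so that the per-head bounds are combined through a single quadratic form rather than summed coordinate-wise. Writing $\Delta=x-y$, that proof supplies $\norm{U(x)-U(y)}\le \|W_o\|\big(\sum_h \|U_h(P_h\Delta)\|^2\big)^{1/2}$, and each head obeys $\|U_h(P_h\Delta)\|\le L_h\|P_h\Delta\|$. First I would square and sum these head bounds to reach $\sum_h \|U_h(P_h\Delta)\|^2\le \sum_h L_h^2\|P_h\Delta\|^2$, which is the quantity I intend to sharpen.

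The key move is to recognize the right-hand side as a quadratic form in $\Delta$. Since $\|P_h\Delta\|^2=\langle P_h\Delta, P_h\Delta\rangle=\langle \Delta, P_h^{*}P_h\Delta\rangle$, linearity gives $\sum_h L_h^2\|P_h\Delta\|^2=\langle \Delta, S\Delta\rangle$ with $S=\sum_h L_h^2 P_h^{*}P_h$. The operator $S$ is self-adjoint and positive semidefinite, being a nonnegative combination of the positive semidefinite operators $P_h^{*}P_h$, so its Rayleigh quotient is controlled by its top eigenvalue: $\langle \Delta, S\Delta\rangle\le \lambda_{\max}(S)\,\norm{\Delta}^2$. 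Taking square roots and inserting into the \Cref{prop:nonorth} bound yields the first displayed inequality.

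For the second (looser) inequality I would invoke monotonicity of $\lambda_{\max}$ on positive semidefinite operators. Factoring out the largest modulus gives the Loewner domination $S\preceq (\max_h L_h^2)\sum_h P_h^{*}P_h$, whence $\lambda_{\max}(S)\le (\max_h L_h)^2\,\lambda_{\max}\big(\sum_h P_h^{*}P_h\big)=(\max_h L_h)^2\,\Omega^2$ by \Cref{def:overlap}; a square root finishes this step. For the refinement claim, since $S\succeq 0$ we have $\lambda_{\max}(S)=\|S\|$, and subadditivity of the operator norm together with $\|P_h^{*}P_h\|=\|P_h\|^2$ gives $\|S\|\le \sum_h L_h^2\|P_h\|^2$, which is precisely the constant appearing in \Cref{prop:nonorth}.

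There is no deep obstacle here: every step is a standard spectral estimate, and the result is essentially a repackaging of \Cref{prop:nonorth}. The only point demanding genuine care is the bookkeeping that licenses replacing the term-by-term triangle bound by the sharper Rayleigh bound — one must verify that $S$ really is self-adjoint and positive semidefinite, so that $\lambda_{\max}(S)=\|S\|=\sup_{\Delta\ne 0}\langle\Delta,S\Delta\rangle/\norm{\Delta}^2$, and that Loewner monotonicity of $\lambda_{\max}$ is applicable. The conceptual content is simply that coupling the heads through one quadratic form captures their geometric overlap, encoded by the cross terms in $\sum_h P_h^{*}P_h$, whereas the cruder bound treats each $\|P_h\|$ in isolation and therefore over-counts.
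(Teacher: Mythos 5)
Your proposal is correct and follows exactly the route the paper intends for \Cref{cor:overlap} (the evidence map lists ``triangle inequality; spectral norms; Rayleigh quotients''): you reuse the intermediate bound $\norm{U(x)-U(y)}\le \|W_o\|\big(\sum_h \|U_h(P_h\Delta)\|^2\big)^{1/2}$ from \Cref{prop:nonorth}, identify $\sum_h L_h^2\|P_h\Delta\|^2=\langle\Delta,S\Delta\rangle$ as a Rayleigh quotient of the self-adjoint positive semidefinite $S$, and obtain the looser $\Omega$-bound and the refinement claim via Loewner monotonicity and norm subadditivity with $\|P_h^{*}P_h\|=\|P_h\|^2$. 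All steps are sound, so nothing further is needed.
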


\paragraph{Practical estimation (overlap sensitivity).}
Compute $\Omega$ (or $\sqrt{\lambda_{\max}(S)}$ when $L_h$ differ) by power iteration on $\sum_h P_h^{\!*}P_h$ (or on $S$). In the orthogonal/complete case $\Omega=1$; heavy overlap inflates $\Omega$ up to $\sqrt{H}$, tightening the contraction margin linearly in $\Omega$.

\section{The Manuscript Computer (MC): Operators as Computation}\label{sec:mc}
\begin{definition}[MC]
An MC is $\mathsf{MC}=(\mathcal{X},\ \mathcal{O},\ \mathcal{S},\ R,\ \iota,\ \pi)$ with: (i) complete metric space $\mathcal{X}=(X,d)$; (ii) nonexpansive primitives $\mathcal{O}$ (averaged maps, isometries, nonexpansive retractions onto closed convex/affine sets); (iii) schedule $\mathcal{S}=(U_t)$; (iv) readout $R:X\to Y$; (v) encoders $\iota,\pi$ between bitstrings and $X,Y$. Execution: $x_{t}=U_t(x_{t-1})$, output $o=\pi(R(x_N))$ (cf.~operator-composition view of differentiable programming~\cite{baydin2018automatic}).
\end{definition}

\begin{lemma}[Primitive realizability]\label{lem:mc-primitives}
Affinely writing/erasing bits, coordinate permutations, and singleton anchors (constant writes) are nonexpansive; compositions remain nonexpansive (closedness under composition follows from~\cite[Ch.~4]{bauschke2017convex}).
\end{lemma}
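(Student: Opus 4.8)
The plan is to treat the three named primitive families one at a time, show each is nonexpansive by a direct norm estimate, and then close under composition by telescoping; throughout I work in $\R^d$ (equivalently $\HH$ with a fixed orthonormal basis), which is where the coordinate structure implicit in "bits" lives. First I would fix concrete models. A \emph{coordinate permutation} is the linear map $\Pi_\sigma x=(x_{\sigma(1)},\dots,x_{\sigma(d)})$ for a permutation $\sigma$; it is orthogonal, hence an isometry, so $\norm{\Pi_\sigma x-\Pi_\sigma y}=\norm{x-y}$ and $\mathrm{Lip}(\Pi_\sigma)=1$. A \emph{singleton anchor} (constant write) is $Cx\equiv c_0$ for a fixed $c_0$; then $\norm{Cx-Cy}=0\le\norm{x-y}$, so it is $0$-Lipschitz, and it coincides with the metric projection onto the closed affine set $\{c_0\}$.

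Next I would model \emph{affinely writing/erasing a bit} at coordinate $i$ as the metric projection $P$ onto the affine hyperplane $H_{i,c}=\{x:\langle x,e_i\rangle=c\}$, i.e.\ $(Px)_j=x_j$ for $j\ne i$ and $(Px)_i=c$. A one-line coordinate computation gives
\[
\norm{Px-Py}^2=\sum_{j\ne i}(x_j-y_j)^2\le\norm{x-y}^2,
\]
so each write/erase is nonexpansive. More structurally, $P$ is a metric projection onto a closed affine set and is therefore firmly nonexpansive by \cite[Prop.~4.8]{bauschke2017convex}---the same property already invoked in \Cref{thm:drift-projection} and \Cref{prop:affine-proj}---which yields nonexpansiveness \emph{a fortiori}. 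Writing and erasing differ only in the target value $c$ (erasure being $c=0$) and are covered uniformly.

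Finally, closure under composition is a short telescope: if $T_1,T_2$ are nonexpansive then $\norm{T_1T_2x-T_1T_2y}\le\norm{T_2x-T_2y}\le\norm{x-y}$, and an easy induction extends this to any finite schedule $U_N\cdots U_1$; this is exactly the composition-closure statement cited from \cite[Ch.~4]{bauschke2017convex}.

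The only real subtlety---and the step I would flag---is \emph{formalization rather than estimation}: one must pin down precisely what "affinely writing bits" denotes, since a general affine map $x\mapsto Ax+b$ is nonexpansive only when $\norm{A}_{2\to2}\le1$. For the coordinate writes/erases above this holds automatically, as $A$ is a coordinate projection of operator norm $1$, but I would state that hypothesis explicitly so the lemma is not misread as asserting nonexpansiveness of arbitrary affine maps. Beyond this modelling choice, every claim reduces to a direct norm inequality and presents no genuine obstacle.
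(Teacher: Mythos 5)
Your proof is correct and takes essentially the same approach the paper intends: the paper states \Cref{lem:mc-primitives} without a written proof, relying on exactly the routine facts you supply---permutations are orthogonal isometries, constant writes are $0$-Lipschitz projections onto singletons, coordinate writes/erases are metric projections onto closed affine sets and hence (firmly) nonexpansive by \cite[Prop.~4.8]{bauschke2017convex}, and composition closure is the telescoped Lipschitz product cited from \cite[Ch.~4]{bauschke2017convex}. Your closing caveat, that ``affinely writing bits'' must be read as coordinate-projection-type maps with operator norm at most $1$ rather than arbitrary affine maps $x\mapsto Ax+b$, is a sensible sharpening of the statement, not a deviation from it.
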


\paragraph{Worked branching example (guarded blocks with an explicit 1-Lipschitz bound).}
Let state $s=(x,y,b)\in \R^d\times\R^d\times\R$, where $b\in\{0,1\}$ is a guard register. Consider
\[
y := F_0(x)\ \text{if } b=0,\qquad y := F_1(x)\ \text{if } b=1,
\]
with $F_0,F_1$ \emph{nonexpansive affine} maps. Define affine guards $H_0:=\{b=0\}$ and $H_1:=\{b=1\}$; let $E_{H_i}$ be the orthogonal affine projections. Define branch sets
\[
C_0:=\{(x,y,b): y=F_0(x),\ b=0\},\quad
C_1:=\{(x,y,b): y=F_1(x),\ b=1\}.
\]
Anchored-implication style blocks
\[
\mathcal{B}_0 := I - E_{H_1} + E_{H_1}P_{C_1},\qquad
\mathcal{B}_1 := I - E_{H_0} + E_{H_0}P_{C_0}
\]
apply the active branch while leaving the inactive branch unchanged.

\begin{lemma}[Direct $1$-Lipschitz (and firm) bound for $\mathcal{B}_0$]\label{lem:B0-firm}
Work in the product Hilbert space $\mathcal{X}=\R^d_x\oplus\R^d_y\oplus \R_b$ with the standard inner product. Let $H_1=\{b=1\}$ and let $P$ denote the \emph{linear part} of $E_{H_1}$, i.e., the orthogonal projector onto the subspace $V:=\{(u_x,u_y,u_b):u_b=0\}$. Assume the constraint set $C_1$ is contained in $H_1$ and is of the form $C_1=\mathrm{Graph}(F_1)\times\{1\}$ with $F_1$ nonexpansive affine. Then for all $u,v\in\mathcal{X}$,
\[
\|\mathcal{B}_0 u-\mathcal{B}_0 v\|^2
= \|(I-P)(u-v)\|^2+\|P_{C_1}u-P_{C_1}v\|^2
\le \|u-v\|^2,
\]
and moreover
\[
\|\mathcal{B}_0 u-\mathcal{B}_0 v\|^2\ \le\ \langle \mathcal{B}_0 u-\mathcal{B}_0 v,\ u-v\rangle,
\]
so $\mathcal{B}_0$ is \emph{firmly nonexpansive}.
\end{lemma}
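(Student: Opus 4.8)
The plan is to reduce $\mathcal{B}_0$ to a sum of two pieces whose increments land in orthogonal subspaces, and then transport the firm nonexpansiveness of the metric projection $P_{C_1}$ across the linear projector $P$. First I would use the containment $C_1\subseteq H_1$: since every point of $C_1$ lies in $H_1$ and $E_{H_1}$ acts as the identity on $H_1$, we have $E_{H_1}P_{C_1}=P_{C_1}$, so $\mathcal{B}_0=(I-E_{H_1})+P_{C_1}$. Writing the affine projection as $E_{H_1}=P+e$ with linear part $P$ and translation $e=(0,0,1)$, the translation cancels in any increment, giving
\[
\mathcal{B}_0 u-\mathcal{B}_0 v=(I-P)(u-v)+\big(P_{C_1}u-P_{C_1}v\big).
\]

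Next I would record the orthogonality that drives the Pythagorean split. The term $(I-P)(u-v)$ lies in $V^{\perp}$ (the $b$-axis), whereas $P_{C_1}u$ and $P_{C_1}v$ both lie in $C_1\subseteq H_1$ and so share the $b$-coordinate $1$; their difference therefore lies in $V$. The two summands are orthogonal, so the Pythagorean theorem yields the claimed identity
\[
\|\mathcal{B}_0 u-\mathcal{B}_0 v\|^2=\|(I-P)(u-v)\|^2+\|P_{C_1}u-P_{C_1}v\|^2 .
\]

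The crux is the bound $\|P_{C_1}u-P_{C_1}v\|\le\|P(u-v)\|$, which I would derive from the nested-projection identity $P_{C_1}=P_{C_1}\circ E_{H_1}$. This identity holds because, for $c\in C_1\subseteq H_1$, the orthogonal splitting $\|u-c\|^2=\|u-E_{H_1}u\|^2+\|E_{H_1}u-c\|^2$ has a first term independent of $c$, so the minimizer over $c$ is unchanged when $u$ is replaced by $E_{H_1}u$. Since $E_{H_1}u-E_{H_1}v=P(u-v)$ and $P_{C_1}$ is nonexpansive, $\|P_{C_1}u-P_{C_1}v\|\le\|P(u-v)\|$. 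Combining this with the displayed identity and the orthogonal decomposition $\|u-v\|^2=\|P(u-v)\|^2+\|(I-P)(u-v)\|^2$ gives $\|\mathcal{B}_0u-\mathcal{B}_0v\|^2\le\|u-v\|^2$.

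For firm nonexpansiveness I would expand $\langle\mathcal{B}_0u-\mathcal{B}_0v,\,u-v\rangle$ along the same decomposition. The orthogonal-projector part contributes exactly $\langle(I-P)(u-v),u-v\rangle=\|(I-P)(u-v)\|^2$. For the projection part I would invoke the firm nonexpansiveness of $P_{C_1}$ at the points $E_{H_1}u,E_{H_1}v$, namely $\|P_{C_1}u-P_{C_1}v\|^2\le\langle P_{C_1}u-P_{C_1}v,\,E_{H_1}u-E_{H_1}v\rangle=\langle P_{C_1}u-P_{C_1}v,\,P(u-v)\rangle$; because $P_{C_1}u-P_{C_1}v\in V$ is orthogonal to $(I-P)(u-v)$, the right-hand inner product equals $\langle P_{C_1}u-P_{C_1}v,\,u-v\rangle$. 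Adding the two contributions gives $\|\mathcal{B}_0u-\mathcal{B}_0v\|^2\le\langle\mathcal{B}_0u-\mathcal{B}_0v,\,u-v\rangle$. The step I expect to be the main obstacle is the nested-projection reduction $P_{C_1}=P_{C_1}\circ E_{H_1}$: it is precisely what lets the firm nonexpansiveness of $P_{C_1}$—a property it enjoys in its own argument—be read against $u-v$ rather than against $E_{H_1}u-E_{H_1}v$, and it relies essentially on $C_1$ being affine and contained in $H_1$.
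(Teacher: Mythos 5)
Your proof is correct, and its skeleton coincides with the paper's: the same reduction $E_{H_1}P_{C_1}=P_{C_1}$ from $C_1\subseteq H_1$, the same increment decomposition $\mathcal{B}_0u-\mathcal{B}_0v=(I-P)(u-v)+(P_{C_1}u-P_{C_1}v)$ with orthogonal summands, and the same Pythagorean identity. Where you genuinely diverge is in how you close the $1$-Lipschitz bound: you prove the sharper estimate $\|P_{C_1}u-P_{C_1}v\|\le\|P(u-v)\|$ via the nested-projection identity $P_{C_1}=P_{C_1}\circ E_{H_1}$ (justified by the orthogonal splitting of $\|u-c\|^2$ for $c\in C_1\subseteq H_1$), after which only plain nonexpansiveness of $P_{C_1}$ is needed. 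The paper instead routes everything through firm nonexpansiveness of $P_{C_1}$: its final display $\|\mathcal{B}_0u-\mathcal{B}_0v\|^2\le\|(I-P)(u-v)\|^2+\|P(u-v)\|^2$ implicitly requires the very same bound $\|P_{C_1}u-P_{C_1}v\|\le\|P(u-v)\|$, which there follows from the firm inequality $\|P_{C_1}u-P_{C_1}v\|^2\le\langle P_{C_1}u-P_{C_1}v,\,u-v\rangle$, membership of the difference in $V$, and a tacit Cauchy--Schwarz step that the paper never spells out. Your nested-projection lemma thus makes explicit exactly the fact the paper's last inequality leans on, which is a real gain in transparency. For the firm-nonexpansiveness half, your transfer $\langle P_{C_1}u-P_{C_1}v,\,P(u-v)\rangle=\langle P_{C_1}u-P_{C_1}v,\,u-v\rangle$ (using $P_{C_1}u-P_{C_1}v\in V$) is equivalent to the paper's direct use of the firm inequality at $u,v$ followed by the orthogonality expansion, so both arguments are sound; yours is marginally longer but more self-contained at precisely the point where the paper compresses.
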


\begin{proof}
Since $C_1\subset H_1$, we have $E_{H_1}P_{C_1}=P_{C_1}$ and the difference of two projections lies in $V$, i.e., $P_{C_1}u-P_{C_1}v\in V$. Using the affine-linearity of $E_{H_1}$ on differences, $\mathcal{B}_0 u-\mathcal{B}_0 v=(I-P)(u-v)+P_{C_1}u-P_{C_1}v$ with the two summands orthogonal. Hence
\[
\|\mathcal{B}_0 u-\mathcal{B}_0 v\|^2=\|(I-P)(u-v)\|^2+\|P_{C_1}u-P_{C_1}v\|^2.
\]
Because $P_{C_1}$ is the metric projection onto a closed affine set in a Hilbert space, it is firmly nonexpansive; therefore
\(
\|P_{C_1}u-P_{C_1}v\|^2 \le \langle P_{C_1}u-P_{C_1}v,\ u-v\rangle.
\)
Orthogonality of $(I-P)(u-v)$ and $P_{C_1}u-P_{C_1}v$ yields
\[
\langle \mathcal{B}_0 u-\mathcal{B}_0 v,\ u-v\rangle
= \|(I-P)(u-v)\|^2+\langle P_{C_1}u-P_{C_1}v,\ u-v\rangle,
\]
which dominates $\|\mathcal{B}_0 u-\mathcal{B}_0 v\|^2$ by the previous display. Finally,
\[
\|\mathcal{B}_0 u-\mathcal{B}_0 v\|^2
\le \|(I-P)(u-v)\|^2+\|P(u-v)\|^2=\|u-v\|^2,
\]
establishing the $1$-Lipschitz bound.
\end{proof}

\begin{remark}[Symmetry and scope]
The same argument applies to $\mathcal{B}_1$ with $H_0$ and $C_0$. The key structural assumption is that the constraint set $C_i$ is contained in $H_i$; then $E_{H_i}P_{C_i}=P_{C_i}$ and the guarded block
\[
\mathcal{B}_i=(I-E_{H_{1-i}})+E_{H_{1-i}}P_{C_i}=(I-E_{H_{1-i}})+P_{C_i}
\]
is \emph{firmly nonexpansive} by \Cref{lem:B0-firm}.
\end{remark}

\begin{theorem}[Finite-run equivalence]\label{thm:equivalence}
\emph{Trace realization:} For any terminating trace $(s_0,\dots,s_N)$ and output $o$, an MC with $U_t=P_{\{s_t\}}$ yields $x_t=s_t$ and $\pi(R(x_N))=o$ (constant projections; $\mathrm{Lip}=0$).

\emph{Program-step realization:} If each instruction is (i) an affine write/permute/translate on a scaled encoding (operator norm $\le 1$), or (ii) a guarded update realized by $\mathcal{B}_0,\mathcal{B}_1$ with nonexpansive $F_b$ and affine guards as above, then a program-dependent (input-independent) MC schedule reproduces the run for any encoded input using only firmly nonexpansive/nonexpansive operators.
\end{theorem}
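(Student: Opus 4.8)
The plan is to treat the two realizations separately, since they make different demands. For the trace realization I would argue directly: the primitive $P_{\{s_t\}}$ is the metric projection onto a singleton, hence the constant map $u\mapsto s_t$, with $\mathrm{Lip}(P_{\{s_t\}})=0$ and (trivially) firm nonexpansiveness, because $\langle P_{\{s_t\}}u-P_{\{s_t\}}v,\,u-v\rangle=0=\|P_{\{s_t\}}u-P_{\{s_t\}}v\|^2$. Taking the schedule $U_t:=P_{\{s_t\}}$, a one-line induction gives $x_t=s_t$ for every $t$ independently of $x_0$, so $x_N=s_N$, and choosing the readout so that $\pi(R(s_N))=o$ closes this part. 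This schedule is trace- (hence input-) dependent, which is acceptable here.

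For the program-step realization the goal is a single \emph{input-independent} schedule, so I would fix one operator per program instruction and let all data dependence live in the runtime state. Instructions of type (i) are affine maps $u\mapsto Au+c$ whose linear part has operator norm $\le 1$; their difference map is $A(u-v)$, so $\|A\|\le 1$ yields nonexpansiveness, while permutations and translations are isometries and a fortiori nonexpansive. Instructions of type (ii) are the guarded blocks $\mathcal{B}_0,\mathcal{B}_1$, which are firmly nonexpansive by \Cref{lem:B0-firm} and its symmetric companion; the only hypothesis needed there is the containment $C_i\subset H_i$, giving $E_{H_i}P_{C_i}=P_{C_i}$ and the orthogonal split used in that proof. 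Closure of nonexpansiveness under composition (\Cref{lem:mc-primitives}) then guarantees that the whole schedule $U_N\cdots U_1$ remains (firmly) nonexpansive.

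With the operators in hand, correctness is an induction on the instruction counter: assuming the encoded configuration $x_{t-1}=\iota(\text{config}_{t-1})$ is reached, I would check that $U_t$ maps it to $\iota(\text{config}_t)$. For type (i) this is the definitional action of the affine write/permute/translate on the scaled encoding. For type (ii) the branching is resolved internally: the guard register carries $b$, the block reads it through the guard projection $E_{H_i}$, and by $C_i\subset H_i$ the inactive component passes through $I-E_{H_i}$ unchanged while the active component is overwritten by $P_{C_i}$. Crucially the operator $U_t$ itself does not depend on $b$, only the state does, so the schedule is genuinely program-dependent and input-independent, in contrast with the trace realization.

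The step I expect to be the main obstacle is precisely the behavioral correctness of the guarded block: verifying that the metric projection $P_{C_i}$ realizes the intended assignment $y:=F_i(x)$ rather than merely returning some nearest point on the graph. For a constant or anchor write this is immediate, since projecting onto $\{y=\text{const}\}$ coincides with overwriting; but for a general affine $F_i=L(\cdot)+c$ the metric projection onto the graph also perturbs $x$ (it solves $(I+L^\top L)x^*=x_0+L^\top(y_0-c)$), so reproducing the assignment exactly forces a design choice in the encoding, for instance weighting the read-only block so that the projection freezes $x$ and writes only $y$, equivalently realizing the assignment as a projection onto the fiber orthogonal to the written register. I would isolate this as a short lemma stating that projection onto the branch constraint equals the coordinate overwrite under the stated encoding, after which the induction closes and the equivalence follows; the nonexpansiveness half is unaffected by this choice, since \Cref{lem:B0-firm} uses only $C_i\subset H_i$ and firm nonexpansiveness of $P_{C_i}$.
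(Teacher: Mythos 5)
Your trace-realization argument and the Lipschitz half of the program-step claim are correct and match what the paper itself offers: \Cref{thm:equivalence} carries no standalone proof, resting on constant projections for the first part and on \Cref{lem:mc-primitives} and \Cref{lem:B0-firm} for the second, with the behavioral semantics of the guarded blocks merely \emph{asserted} in the worked example (``apply the active branch while leaving the inactive branch unchanged''). Your final paragraph correctly isolates that assertion as the crux, and your normal-equation computation $(I+L^\top L)x^*=x_0+L^\top(y_0-c)$ is exactly right: for non-constant affine $F_1$, the nearest point of $\mathrm{Graph}(F_1)\times\{1\}$ perturbs the read register, so $P_{C_1}$ does not implement the assignment $y:=F_1(x)$. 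Neither the paper nor \Cref{lem:B0-firm} (which concerns only the Lipschitz geometry, not the semantics) resolves this.

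The gap is larger than the one your proposed lemma would patch, however, and it also invalidates the step of your induction where you assert that the inactive component ``passes through $I-E_{H_i}$ unchanged.'' Compute $\mathcal{B}_0$ pointwise: since $C_1\subset H_1$ gives $E_{H_1}P_{C_1}=P_{C_1}$, and $E_{H_1}(x,y,b)=(x,y,1)$, one has
\[
\mathcal{B}_0(x,y,b)\;=\;(x,y,b)-(x,y,1)+P_{C_1}(x,y,b)\;=\;\big(x^*,\,F_1(x^*),\,b\big),
\]
where $x^*$ solves your normal equation and depends only on $(x,y)$, because the $b$-coordinate contributes the constant $(1-b)^2$ to the squared distance to $C_1$. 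So the guard bit is restored, but the data registers are overwritten with the branch-$1$ projection \emph{regardless of} $b$: even in the constant-write case $F_1\equiv c_1$ one gets $\mathcal{B}_0(x,y,0)=(x,c_1,0)$. The guard is semantically inert. The cancellation that makes $I-E_A+E_AE_B$ act as the identity on the ``$\lnot A$'' part in \Cref{prop:boolean} requires linear, commuting projections applied to vectors; with an affine $E_{H_1}$ and a nonlinear $P_{C_1}$ it is unavailable. Consequently your correctness induction fails at every guarded instruction whose guard is false, and the lemma you sketch---equating projection with coordinate overwrite on the \emph{active} branch only---would not close it: you must additionally redesign the block or the encoding so that $\mathcal{B}_i$ fixes every state off its guard, or else fall back on state-dependent constant projections as in your trace realization, which surrenders precisely the input-independence that part (ii) claims. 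As written, that missing construction is supplied neither by your proposal nor by the paper.
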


\begin{proposition}[Perturbation envelope]\label{prop:robust}
Let $\tilde U_t$ satisfy $\norm{\tilde U_t(x)-U_t(x)}\le \delta_t$ and $\mathrm{Lip}(\tilde U_t)\le \tau_t$. Then at event times $n_k$,
\[
\norm{\tilde x_{n_k}-x_{n_k}}\ \le\ \sum_{j=1}^k\Big(\prod_{i=j+1}^{k}\tau_{n_i}\Big)\,\delta_{n_j}.
\]
If $\prod_k\tau_{n_k}=0$ and $\sum_k \delta_{n_k}<\infty$, then $\tilde x_{n_k}\to x_{n_k}$ (variation-of-constants style bound).
\end{proposition}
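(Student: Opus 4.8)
The plan is to reduce everything to a single scalar error recursion and then unroll it by discrete variation of constants. Set $e_t := \norm{\tilde x_t - x_t}$ with $e_0 = 0$ (the two machines start from the same encoded input). First I would estimate one step by inserting the intermediate point $\tilde U_t(x_{t-1})$ into $\tilde x_t - x_t = \tilde U_t(\tilde x_{t-1}) - U_t(x_{t-1})$ and applying the triangle inequality:
\begin{equation*}
\norm{\tilde U_t(\tilde x_{t-1}) - U_t(x_{t-1})} \le \norm{\tilde U_t(\tilde x_{t-1}) - \tilde U_t(x_{t-1})} + \norm{\tilde U_t(x_{t-1}) - U_t(x_{t-1})}.
\end{equation*}
The first term is bounded by $\tau_t\,e_{t-1}$ via $\mathrm{Lip}(\tilde U_t)\le\tau_t$, and the second by the pointwise defect bound $\delta_t$. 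This yields the affine recursion $e_t \le \tau_t e_{t-1} + \delta_t$, the discrete analogue of a linear ODE with forcing.

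Second, I would solve this recursion explicitly. Unrolling $e_t \le \tau_t e_{t-1}+\delta_t$ from $e_0=0$ gives the variation-of-constants form $e_t \le \sum_{j=1}^{t}\big(\prod_{i=j+1}^{t}\tau_i\big)\delta_j$, with the convention that the empty product ($j=t$) equals $1$. Reading the indices at the event scale---equivalently, applying the same one-step argument to the composite event blocks $\Psi_k$, whose Lipschitz modulus and pointwise defect play the roles of $\tau_{n_k}$ and $\delta_{n_k}$---specializes this to the displayed bound $\norm{\tilde x_{n_k}-x_{n_k}} \le \sum_{j=1}^{k}\big(\prod_{i=j+1}^{k}\tau_{n_i}\big)\delta_{n_j}$. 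No fixed point or nonexpansiveness beyond the stated Lipschitz bounds is needed here.

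Third, for the convergence claim I would run a Toeplitz-type splitting argument in the natural MC regime where the primitives are nonexpansive, so $\tau_{n_i}\le 1$ and the weights $w_{k,j}:=\prod_{i=j+1}^{k}\tau_{n_i}$ satisfy $w_{k,j}\le 1$. Given $\epsilon>0$, summability $\sum_k\delta_{n_k}<\infty$ lets me pick $J$ with $\sum_{j>J}\delta_{n_j}<\epsilon$, so the tail $\sum_{j=J+1}^{k}w_{k,j}\delta_{n_j}\le\epsilon$ uniformly in $k$. For the remaining finite head $\sum_{j=1}^{J}w_{k,j}\delta_{n_j}$, each weight factors as $w_{k,j}=\big(\prod_{i=1}^{k}\tau_{n_i}\big)/\big(\prod_{i=1}^{j}\tau_{n_i}\big)$ (harmlessly interpreting ratios when some factor vanishes), and $\prod_{i=1}^{k}\tau_{n_i}\to 0$ forces each of the finitely many head terms to $0$ as $k\to\infty$. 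Hence $\limsup_k \norm{\tilde x_{n_k}-x_{n_k}}\le\epsilon$ for every $\epsilon$, giving $\tilde x_{n_k}\to x_{n_k}$.

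The main obstacle is this last step, not the algebra: the unrolling is routine, but the limit $\norm{\tilde x_{n_k}-x_{n_k}}\to 0$ does not follow from $\prod_k\tau_{n_k}=0$ and $\sum_k\delta_{n_k}<\infty$ by a single estimate, and it genuinely relies on boundedness of the weights $w_{k,j}$. I would therefore flag explicitly that the convergence conclusion is established in the nonexpansive regime $\tau_{n_i}\le 1$ (as guaranteed for MC primitives); without such a bound on the weights, the head/tail splitting can break down even when both hypotheses hold.
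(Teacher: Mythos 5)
Your proposal is correct and follows essentially the same route the paper intends: the one-step triangle-inequality recursion $e_t\le \tau_t e_{t-1}+\delta_t$ applied at the event-block scale and unrolled by discrete variation of constants, which are exactly the devices (\emph{error recursion}, \emph{variation-of-constants}, \emph{Lipschitz product}) the paper records for \Cref{prop:robust} in its evidence map. Your closing caveat is also well taken: the head/tail splitting that yields $\tilde x_{n_k}\to x_{n_k}$ genuinely requires the suffix products $\prod_{i=j+1}^{k}\tau_{n_i}$ to be uniformly bounded (as in the nonexpansive regime $\tau_{n_i}\le 1$ natural for MC primitives), a hypothesis the paper leaves implicit when asserting the convergence claim from $\prod_k\tau_{n_k}=0$ and $\sum_k\delta_{n_k}<\infty$ alone.
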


\begin{proposition}[Step complexity and Lipschitz budget]\label{prop:complexity}
Let a program have $K$ primitive instructions, each realized by one nonexpansive operator, and suppose each guarded instruction uses at most two guard projections plus one constraint projection. Then the number of operator applications is $\le K+2\,(\#\text{guards})$. The end-to-end Lipschitz constant of the realized map is $\le 1$.
\end{proposition}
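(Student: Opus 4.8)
The plan is to dispatch the two assertions separately, since they are of different character: the operator count is pure bookkeeping, while the Lipschitz budget is an analytic composition estimate. For the count, I would partition the $K$ primitive instructions into plain instructions and guarded instructions, writing $g:=\#\text{guards}$ for the number of the latter. A plain affine write, permutation, or translation is realized by exactly one nonexpansive operator, so these contribute $K-g$ applications. Each guarded instruction is realized by a block $\mathcal{B}_i=(I-E_{H_{1-i}})+P_{C_i}$, which by hypothesis consumes at most two guard projections $E_{H_0},E_{H_1}$ together with one constraint projection $P_{C_i}$, i.e.\ at most three applications. Summing gives a total of at most $(K-g)\cdot 1+g\cdot 3=K+2g$, which is the stated bound; this step requires no estimates.

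For the Lipschitz budget the strategy is to certify that every operator appearing in the realized schedule is nonexpansive and then invoke sub-multiplicativity of Lipschitz moduli under composition, so that the end-to-end map $U_N\circ\cdots\circ U_1$ satisfies $\mathrm{Lip}(U_N\circ\cdots\circ U_1)\le\prod_t\mathrm{Lip}(U_t)\le 1$, with closedness of the nonexpansive class under composition already recorded in \Cref{lem:mc-primitives}. Concretely I would check three points: (i) plain affine instructions have linear part of operator norm $\le 1$ on the scaled encoding and are therefore nonexpansive, with permutations isometric and translations irrelevant to the modulus; (ii) guard and constraint projections are metric projections onto closed affine sets, hence firmly nonexpansive and a fortiori $1$-Lipschitz; and (iii) each guarded block $\mathcal{B}_i$ is itself nonexpansive.

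The hard part is precisely point (iii), and it is where I would be careful rather than cavalier. A guarded block $\mathcal{B}_i=(I-E_{H_{1-i}})+P_{C_i}$ is an affine \emph{combination} of projections, not a composition, so its modulus cannot be read off from the product bound used for the rest of the schedule; naively one would only get the sum of the constituent moduli, which exceeds $1$. The resolution is structural: the hypothesis $C_i\subset H_i$ forces $E_{H_i}P_{C_i}=P_{C_i}$ and produces the orthogonal decomposition $\mathcal{B}_i u-\mathcal{B}_i v=(I-P)(u-v)+(P_{C_i}u-P_{C_i}v)$ with orthogonal summands, exactly the situation of \Cref{lem:B0-firm}, which certifies $\mathcal{B}_i$ as firmly nonexpansive and therefore $1$-Lipschitz. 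Once every realized operator, whether a plain instruction or an entire guarded block, has been certified nonexpansive in this way, the composition inequality closes the argument and yields an end-to-end Lipschitz constant of at most $1$.
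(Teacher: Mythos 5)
Your proposal is correct and follows essentially the same route the paper intends (the paper states \Cref{prop:complexity} without a standalone proof, but its evidence map lists exactly your devices: operator counting plus the Lipschitz product, with nonexpansiveness of the guarded block supplied by \Cref{lem:B0-firm}), and you rightly flag the one nontrivial point --- that $\mathcal{B}_i$ is an affine sum rather than a composition, so its $1$-Lipschitz bound must come from the orthogonal decomposition under $C_i\subset H_i$, not from sub-multiplicativity. One cosmetic nuance: in $\mathcal{B}_0=I-E_{H_1}+E_{H_1}P_{C_1}$ the ``two guard projections'' are two applications of the \emph{same} projection $E_{H_1}$, not $E_{H_0}$ and $E_{H_1}$ as you wrote, but this does not affect the count $\le K+2\,(\#\text{guards})$.
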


\subsection{Geometry: Hilbert, CAT(0), Banach (constructive retractions)}\label{sec:mc-geometry}
\textbf{Hilbert.} Projections onto closed convex sets are firmly nonexpansive and single-valued~\cite[Prop.~4.8]{bauschke2017convex}. 

\begin{proposition}[Radial retraction onto a norm-ball is $1$-Lipschitz in Hilbert spaces]\label{prop:radial-hilbert}
Let $(\HH,\langle\cdot,\cdot\rangle)$ be a real Hilbert space with norm $\|\cdot\|$, and $r>0$. Define the \emph{radial retraction}
\[
R_r(x):=\begin{cases}
x,& \|x\|\le r,\\[2pt]
\dfrac{r}{\|x\|}\,x,& \|x\|>r.
\end{cases}
\]
Then $\|R_r(x)-R_r(y)\|\le \|x-y\|$ for all $x,y\in\HH$.
\end{proposition}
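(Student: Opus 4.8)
The plan is to recognize that the radial retraction $R_r$ is nothing other than the metric projection $P_{B_r}$ onto the closed norm-ball $B_r:=\{u\in\HH:\norm{u}\le r\}$, and then to invoke the firm nonexpansiveness of projections onto closed convex sets — exactly the tool already used in \Cref{thm:drift-projection} and \Cref{prop:affine-proj}. Since $B_r$ is closed and convex, $P_{B_r}$ is firmly nonexpansive and therefore $1$-Lipschitz; hence once I establish the identity $R_r=P_{B_r}$ the statement follows immediately with no further estimates. This keeps the proof in the same conceptual register as the rest of the paper and avoids any ad hoc trigonometric computation.

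To establish $R_r=P_{B_r}$ I would split on $\norm{x}\le r$ versus $\norm{x}>r$. In the first case the nearest point of $B_r$ to $x$ is $x$ itself, matching the first branch of $R_r$. In the second case I would verify that $u^\star:=(r/\norm{x})\,x$ satisfies the variational (obtuse-angle) characterization of the projection, namely $\langle x-u^\star,\ u-u^\star\rangle\le 0$ for all $u\in B_r$. Because $x-u^\star=\tfrac{\norm{x}-r}{\norm{x}}\,x$ is a positive multiple of $x$ and $\norm{u^\star}=r$, this inequality reduces, after dropping the positive scalar, to $\langle x,u\rangle\le r\,\norm{x}$, which is precisely Cauchy–Schwarz together with $\norm{u}\le r$. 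This is the entire content of the identification and is routine.

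If a fully self-contained argument avoiding the projection theorem is preferred, I would instead bound $\norm{R_r x-R_r y}$ directly by case analysis on whether each of $x,y$ lies inside or outside the ball. Writing $a:=\norm{x}$, $b:=\norm{y}$, $c:=\langle x,y\rangle$, the both-inside case is trivial, and in the both-outside case expansion gives $\norm{x-y}^2-\norm{R_rx-R_ry}^2 = a^2+b^2-2r^2-2c\bigl(1-r^2/(ab)\bigr)$; since $ab>r^2$ the coefficient of $c$ is negative, so the expression is minimized at the Cauchy–Schwarz extreme $c=ab$, where it collapses to $(a-b)^2\ge 0$. The main obstacle is the \emph{mixed} case ($\norm{x}>r\ge\norm{y}$), where the same reduction yields, at $c=ab$, the quantity $(a-r)(a+r-2b)$; here I expect the delicate point to be confirming nonnegativity, which follows from the ordering $a>r\ge b$ since it forces both $a-r>0$ and $a+r-2b>0$. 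I would therefore lead with the clean projection-based proof and relegate this case analysis to a remark for readers wanting an elementary verification.
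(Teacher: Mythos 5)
Your proposal is correct, and your lead argument takes a genuinely different route from the paper. The paper proves \Cref{prop:radial-hilbert} by a direct three-case inner-product computation: the mixed case via the chained estimate $2\bigl(1-\tfrac{r}{\|y\|}\bigr)\langle x,y\rangle\le 2r(\|y\|-r)\le(\|y\|+r)(\|y\|-r)$, and the both-outside case by writing $x=\|x\|\hat x$, $y=\|y\|\hat y$ and comparing $r\|\hat x-\hat y\|$ with $\|x-y\|^2=(\|x\|-\|y\|)^2+2\|x\|\|y\|\bigl(1-\langle\hat x,\hat y\rangle\bigr)\ge r^2\|\hat x-\hat y\|^2$. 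Your primary proof instead identifies $R_r=P_{B_r}$ through the variational characterization $\langle x-u^\star,\,u-u^\star\rangle\le 0$ — your reduction to $\langle x,u\rangle\le r\|x\|$ via Cauchy--Schwarz is exactly right, since $x-u^\star$ is a positive multiple of $x$ — and then cites firm nonexpansiveness of projections onto closed convex sets, machinery the paper already relies on (\cite[Prop.~4.8]{bauschke2017convex}, used in \Cref{thm:drift-projection} and \Cref{prop:affine-proj}). This buys brevity, conceptual coherence with the rest of the paper, and in fact a strictly stronger conclusion: firm nonexpansiveness of $R_r$, not merely $\mathrm{Lip}(R_r)\le 1$, which would let $R_r$ join the firmly nonexpansive MC primitives directly. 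What the paper's computation buys is self-containedness (no projection theorem invoked) and an explicit display of which inner-product identities are used, which underpins the remark immediately following the proposition that radial retractions can fail to be nonexpansive in general Banach spaces — precisely because there $R_r$ is no longer the metric projection. Your fallback case analysis is essentially the paper's proof in different algebraic clothing: observing that $\|x-y\|^2-\|R_rx-R_ry\|^2$ is affine and decreasing in $c=\langle x,y\rangle$ (coefficient $-2\bigl(1-r^2/(ab)\bigr)$, resp.\ $-2(a-r)/a$), hence minimized at the Cauchy--Schwarz extreme $c=ab$, where it equals $(a-b)^2\ge 0$, resp.\ $(a-r)(a+r-2b)$; and your sign argument in the mixed case is sound, since $a>r\ge b$ gives $a-r>0$ and $a+r-2b=(a-b)+(r-b)>0$. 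Both of your routes are complete and correct as sketched.
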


\begin{proof}
The case $\|x\|\le r$ and $\|y\|\le r$ is trivial. If $\|x\|\le r\le \|y\|$, set $u:=\frac{r}{\|y\|}y$; then
\[
\|x-u\|^2=\|x\|^2-2\frac{r}{\|y\|}\langle x,y\rangle+r^2,\qquad
\|x-y\|^2=\|x\|^2-2\langle x,y\rangle+\|y\|^2.
\]
Because $\|x\|\le r$ and Cauchy--Schwarz, $2(1-\tfrac{r}{\|y\|})\langle x,y\rangle\le 2(r)(\|y\|-r)\le (\|y\|+r)(\|y\|-r)$, hence $\|x-u\|^2\le \|x-y\|^2$. Finally, if $\|x\|\ge r$ and $\|y\|\ge r$, write $x=\|x\|\hat x$, $y=\|y\|\hat y$ with $\|\hat x\|=\|\hat y\|=1$. Then
\[
\|R_r(x)-R_r(y)\|=r\|\hat x-\hat y\|,\qquad
\|x-y\|^2=(\|x\|-\|y\|)^2+2\|x\|\,\|y\|\big(1-\langle \hat x,\hat y\rangle\big).
\]
Since $\|\hat x-\hat y\|^2=2(1-\langle \hat x,\hat y\rangle)$, we obtain
\(
\|x-y\|^2\ \ge\ 2\,\min\{\|x\|,\|y\|\}^2\,\big(1-\langle \hat x,\hat y\rangle\big)
= \min\{\|x\|,\|y\|\}^2\,\|\hat x-\hat y\|^2\ \ge\ r^2\,\|\hat x-\hat y\|^2,
\)
whence $\|R_r(x)-R_r(y)\|\le \|x-y\|$.
\end{proof}

\textbf{CAT(0).} In a CAT(0) (Hadamard) space, the nearest-point projection $P_C$ onto a nonempty closed convex set $C$ is single-valued and $1$-Lipschitz. \emph{Constructive examples:} (i) On an $\R$-tree (a CAT(0) space), the projection onto a closed connected subtree $C$ maps $x$ to the unique gate point on the simple path from $x$ to $C$; this is $1$-Lipschitz by the four-point inequality. (ii) On a Hadamard manifold (e.g., hyperbolic space), projection onto a closed geodesic ball or a totally geodesic halfspace is realized by following the unique minimizing geodesic to the set; convexity of the squared distance implies nonexpansiveness~\cite{bridson1999metric,bacak2014hadamard}.

\textbf{Banach (explicit nonexpansive retractions).} Metric projections need not be nonexpansive in general Banach spaces, so we employ explicit $1$-Lipschitz retractions in their native norms:
\begin{itemize}
\item \emph{Axis-aligned box clamp in $\ell_p$.} For $1\le p\le\infty$ and $C=[a_1,b_1]\times\cdots\times[a_d,b_d]$, the coordinatewise clamp
\(
R(x)_i=\min\{b_i,\max\{a_i,x_i\}\}
\)
is $1$-Lipschitz under any $\ell_p$ norm because
\(
|R(x)_i-R(y)_i|\le |x_i-y_i|
\)
for each coordinate, and product norms satisfy
\(
\|R(x)-R(y)\|_p\le \|x-y\|_p.
\)
\item \emph{Product retractions.} If $X=X_1\times X_2$ with norm $\|(x_1,x_2)\|=\max\{\|x_1\|_{X_1},\|x_2\|_{X_2}\}$ (or the sum norm), and $R_i$ are $1$-Lipschitz retractions onto $C_i\subset X_i$, then $R(x_1,x_2)=(R_1 x_1, R_2 x_2)$ is $1$-Lipschitz onto $C_1\times C_2$.
\end{itemize}
\begin{remark}[On radial retractions beyond Hilbert]
The proof of \Cref{prop:radial-hilbert} uses inner-product identities. In arbitrary Banach spaces, nonexpansiveness of radial retractions may fail without additional structure; our non-Hilbert deployments therefore rely on the box/product retractions above.
\end{remark}

\section{Scheduling Patterns: Periodic, Random, Adversarial}\label{sec:schedules}
\textbf{Periodic.} With at most $M\!-\!1$ drift steps (factor $\rho$) per block and per-event contraction $\prod_j \mu_{k,j}\le \bar\mu$, we have $\norm{x_{n_k}-z}\le (\rho^{M-1}\bar\mu)^k\norm{x_{n_0}-z}$. 

\textbf{Random (i.i.d.).} Let $Y_k:=\sum_{t=n_{k-1}+1}^{n_k}\log\tau_t$ be i.i.d.\ with $\mathbb{E}[|Y_1|]<\infty$. By the strong law (e.g., \cite[Thm.~2.4.6]{durrett2019prob}), $\frac1K\sum_{k=1}^K Y_k\to \mathbb{E}[Y_1]$. If $\mathbb{E}[Y_1]<0$, then $\sum_{k=1}^K Y_k\to -\infty$ and thus $\prod_{t\le n_K}\tau_t=e^{\sum_{k\le K} Y_k}\to 0$, giving convergence via \Cref{lem:variable-block}. If $\mathbb{E}[Y_1]>0$, divergence occurs almost surely.

\textbf{Adversarial counterexample.} Take $S_t=(1+\varepsilon)I$ for all $t$ ($\rho_t=1+\varepsilon>1$), no intra-event contraction ($\mu_{k,j}=1$), and unbounded gaps $n_{k+1}-n_k\to\infty$. Then $\lambda_k=(1+\varepsilon)^{n_k-n_{k-1}-1}$ and $\prod_{j=1}^K\lambda_j=(1+\varepsilon)^{(n_K-n_0)-K}\to\infty$, showing that without event contractions or bounded gaps the product condition fails.

\begin{proposition}[SLLN-based random scheduling criterion]\label{prop:slln}
Under the i.i.d.\ assumption and $\mathbb{E}[|Y_1|]<\infty$, convergence $\prod_{t\le n_K}\tau_t\to 0$ holds a.s.\ iff $\mathbb{E}[Y_1]<0$. In practice, estimate $\mathbb{E}[Y_1]$ by $\hat m_K=\frac1K\sum_{k=1}^K Y_k$ and declare the convergent regime when $\hat m_K<-\epsilon$ for a safety margin $\epsilon>0$.
\end{proposition}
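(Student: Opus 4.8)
The plan is to reduce everything to the behaviour of the random walk $S_K := \sum_{k=1}^{K} Y_k$. Since $\prod_{t\le n_K}\tau_t = \exp\!\big(S_K\big)$, the event $\{\prod_{t\le n_K}\tau_t\to 0\}$ is exactly $\{S_K\to-\infty\}$, so writing $\mu:=\mathbb{E}[Y_1]$ the claim becomes: $S_K\to-\infty$ almost surely if and only if $\mu<0$. Everything then hinges on how fast, and in what direction, the partial sums of i.i.d.\ increments move.

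First I would clear the two cases that the SLLN settles directly. The strong law (invoked exactly as in \S\ref{sec:schedules}, \cite[Thm.~2.4.6]{durrett2019prob}) gives $S_K/K\to\mu$ a.s. If $\mu<0$ then eventually $S_K\le (\mu/2)K$, so $S_K\to-\infty$ and $\exp(S_K)\to 0$; this is the ``if'' direction, and it feeds straight back into \Cref{lem:variable-block}. If $\mu>0$ then $S_K\to+\infty$, hence $\exp(S_K)\to+\infty$ and the product cannot tend to $0$, which disposes of one half of the contrapositive needed for ``only if''.

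The substantive case is $\mu=0$, where the SLLN is powerless: $S_K/K\to 0$ does not by itself rule out a sublinear drift $S_K\to-\infty$. Here I would argue by recurrence. If $Y_1$ is degenerate then $Y_1\equiv 0$, so $S_K\equiv 0$ and the product is identically $1$, which does not converge to $0$. If $Y_1$ is non-degenerate, then the mean-zero walk $(S_K)$ is recurrent by the Chung--Fuchs theorem, whence $\limsup_K S_K=+\infty$ almost surely; therefore $\limsup_K \exp(S_K)=+\infty$ and the product again fails to converge to $0$. Combined with the $\mu>0$ case, this establishes the contrapositive and completes the ``only if'' direction.

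I expect the $\mu=0$ regime to be the main obstacle, since it falls strictly outside the reach of the law of large numbers. One can partially self-contain it: the quantity $\limsup_K S_K$ is invariant under finite permutations of the increments $(Y_k)$, so by the Hewitt--Savage $0$--$1$ law it equals an a.s.\ constant $c\in[-\infty,+\infty]$; comparing $S_K$ with the shifted walk $S_K-Y_1$ forces $c-Y_1=c$ a.s., which is impossible for finite $c$ unless $Y_1\equiv 0$, pinning $c\in\{-\infty,+\infty\}$ in the non-degenerate case. Ruling out the value $-\infty$ (genuine drift of a mean-zero walk) is the one step I would still route through Chung--Fuchs recurrence rather than an elementary estimate, and it is precisely here that the gap between $S_K/K\to 0$ and $S_K\not\to-\infty$ must be closed.
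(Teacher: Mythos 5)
Your proposal is correct, and it is in fact more complete than the paper's own proof. The paper disposes of \Cref{prop:slln} in two lines: the SLLN gives $\hat m_K\to\mathbb{E}[Y_1]$ a.s., and the claim ``follows by continuity of the exponential and the equivalence between $\sum Y_k\to-\infty$ and $e^{\sum Y_k}\to 0$.'' That argument covers exactly your first two cases ($\mu<0$ and $\mu>0$) but is silent on the boundary case $\mu=0$, which is genuinely needed for the stated ``iff'': as you observe, $S_K/K\to 0$ is a priori compatible with a sublinear drift $S_K\to-\infty$, so the SLLN alone cannot exclude $\prod_{t\le n_K}\tau_t\to 0$ when $\mathbb{E}[Y_1]=0$. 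Your treatment closes this hole correctly: in the degenerate case $S_K\equiv 0$ and the product is identically $1$; in the non-degenerate case the Hewitt--Savage $0$--$1$ law makes $\limsup_K S_K$ an a.s.\ constant, the shift comparison $c=Y_1+c$ forces $c\in\{-\infty,+\infty\}$, and Chung--Fuchs recurrence (available under precisely the standing hypothesis $\mathbb{E}[|Y_1|]<\infty$, $\mathbb{E}[Y_1]=0$; see \cite{durrett2019prob}) rules out $-\infty$, whence $\limsup_K e^{S_K}=+\infty$ a.s.\ and the product does not tend to $0$. Your appeal to Chung--Fuchs at that last step is standard and essentially unavoidable---the same conclusion can be routed through the Kesten/Spitzer trichotomy for one-dimensional walks, but that is proved by the same circle of ideas, so you are not missing an elementary shortcut. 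In short: the paper's proof buys brevity at the cost of a genuine lacuna at $\mu=0$ (strictly speaking, as written it only establishes the ``if'' direction and the $\mu>0$ half of ``only if''), while yours is a complete proof of the proposition as stated, at the modest cost of importing one classical recurrence theorem.
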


\begin{proof}
The SLLN yields $\hat m_K\to \mathbb{E}[Y_1]$ a.s. The claim follows by continuity of the exponential and the equivalence between $\sum Y_k\to -\infty$ and $e^{\sum Y_k}\to 0$.
\end{proof}

\paragraph{Monte-Carlo validation of the SLLN threshold (sweeps).}
We extend the harness (Appendix~\ref{app:repro}) to sample random gaps and drifts, compute $\hat m_K$, and measure empirical decay/growth rates across seeds. \Cref{fig:staircase} shows the staircase envelope; \Cref{fig:mc-slln} shows convergence in the $\hat m_K<0$ regime and divergence when $\hat m_K>0$, with a safety margin $\epsilon$ reducing false classifications.

\begin{figure}[H]
\centering
\begin{tikzpicture}[scale=0.95]
  \draw[-{Stealth[length=2mm]}] (0,0) -- (11.2,0) node[below] {steps $n$ (events marked)};
  \draw[-{Stealth[length=2mm]}] (0,0) -- (0,3.2) node[left] {$\log \norm{x_n - z}$};
  \draw[thick] (0.5,2.6) -- (2.0,2.6) -- (2.0,2.1) -- (3.5,2.1) -- (3.5,1.7) --
                (5.0,1.7) -- (5.0,1.4) -- (6.5,1.4) -- (6.5,1.1) --
                (8.0,1.1) -- (8.0,0.9) -- (9.5,0.9) -- (9.5,0.75);
  \draw[thick,dash dot] (0.5,2.0) -- (2.0,2.0) -- (2.0,2.2) -- (3.5,2.2) -- (3.5,2.45) --
                (5.0,2.45) -- (5.0,2.7) -- (6.5,2.7) -- (6.5,2.95);
  \foreach \x in {2.0,3.5,5.0,6.5,8.0,9.5}{ \draw[densely dotted] (\x,0) -- (\x,3.0); }
\end{tikzpicture}
\caption{Staircase envelope of $\log\norm{x_n-z}$ vs.\ steps $n$; dotted lines mark events. Parameters as in \Cref{ex:numeric} ($M{=}5,\ \rho{=}1.01,\ \alpha{=}0.8$).}
\label{fig:staircase}
\end{figure}
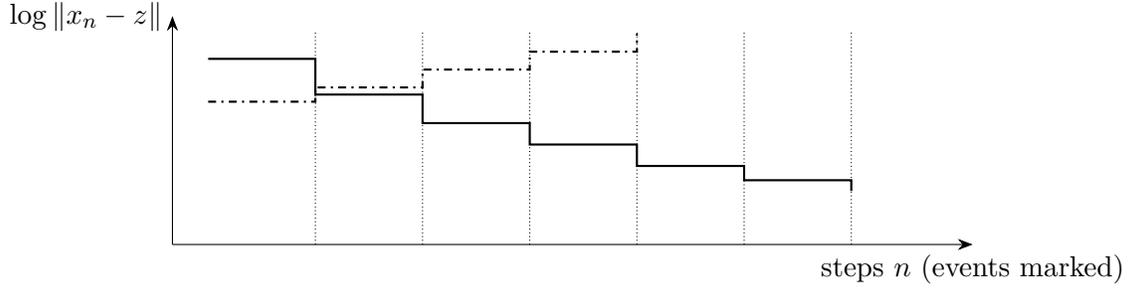

\begin{figure}[H]
\centering
\IfFileExists{mc_slln.png}{%
  \includegraphics[width=.92\linewidth]{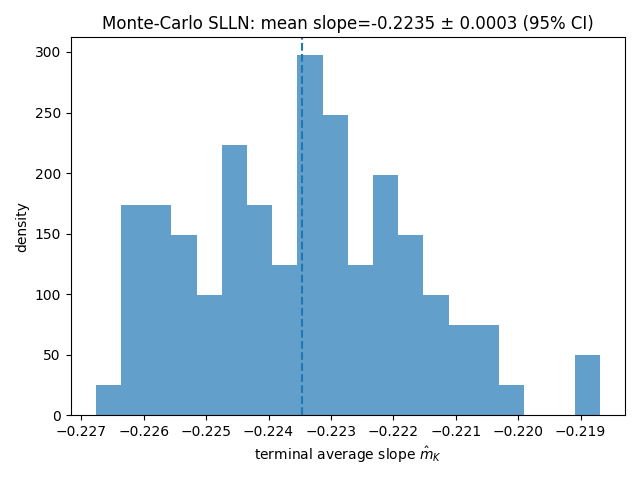}%
}{%
  \begin{tikzpicture}[scale=1.0]
    \draw (0,0) rectangle (10,5);
    \node at (5,2.6) {\large \texttt{mc\_slln.png} not found};
    \node at (5,2.0) {Upload the file to display Monte-Carlo SLLN sweeps.};
  \end{tikzpicture}%
}
\caption{Monte-Carlo sweeps validating the SLLN threshold ($\hat m_K$) and a safety margin $\epsilon$; mean slopes with 95\% CIs across seeds. Settings used for the plot in Appendix~\ref{app:code-mc}: $K{=}400$, trials $=100$.}
\label{fig:mc-slln}
\end{figure}

\section{Deployment Choices, Operational Risks, and Mitigations}\label{sec:deployment}
\textbf{Scheduling \& drift parameters.} If gaps $M$ and drifts $\rho_t$ are such that the block product $\prod_k \lambda_k$ does not decay, trajectories can diverge. \Cref{sec:schedules} gives an adversarial counterexample with $\rho_t=1+\varepsilon>1$, no intra-event contraction, and unbounded gaps (envelope explosion). \emph{Mitigation:} enforce $(\rho^{M-1}\bar\mu)<1$ or monitor $\hat m_K$ in \Cref{prop:slln} and trigger anchor intensification (reduce $M$, enlarge contraction) when $\hat m_K\ge 0$. 

\textbf{Anchor feasibility.} \Cref{thm:drift-projection} requires $\bigcap_k \mathcal{A}_k\neq\varnothing$; otherwise guarantees fail, risking oscillations under projections. \emph{Mitigation:} validate feasibility via a pre-run check or include a slack anchor that collapses to a feasible $\{z\}$ when conflicts are detected.

\textbf{Attention layer design.} Contraction holds under orthogonal heads with isometric $W_o$ (\Cref{prop:head-contraction}) or under the quantitative bound in \Cref{prop:nonorth}. Overlapping heads or $\|W_o\|>1$ can make the layer expansive. \emph{Mitigation:} regularize toward head orthogonality, constrain $\|W_o\|\le 1$ (spectral norm clipping), and estimate $L_h,\|P_h\|$ to verify $\|W_o\|(\sum_h L_h^2\|P_h\|^2)^{1/2}<1$; optionally compute the overlap index $\Omega$ in \Cref{def:overlap}.

\textbf{Geometry assumptions.} Extensions beyond Hilbert require nonexpansive retractions (\S\ref{sec:mc-geometry}); lacking these, anchors may be expansive. \emph{Mitigation:} deploy only in spaces with known Chebyshev properties (uniformly convex Banach, CAT(0)), or implement explicit nonexpansive retractions (examples above).

\textbf{MC execution.} \Cref{thm:equivalence} applies to finite runs with nonexpansive primitives and unique guarded intersections; it is not a universality claim. \emph{Mitigation:} restrict instruction sets to nonexpansive operations; ensure guarded constraints define singletons (or small diameters per \Cref{lem:approx-nesting}); audit step complexity via \Cref{prop:complexity}.

\textbf{Perturbation margin.} Tracking under approximation errors needs $\prod_k \tau_{n_k}\to 0$ and $\sum_k \delta_{n_k}<\infty$ (\Cref{prop:robust}). \emph{Mitigation:} bound per-event error budgets and decay the Lipschitz product when cumulative error crosses a threshold.

\section{Pseudocode Harness (core)}\label{sec:sim}
\begin{lstlisting}
# Conventions: '^dagger' denotes the Moore--Penrose pseudoinverse (used in P_h).
def run_sim(seed: int, N: int, M: int, eps: float, alpha: float, d: int, sigma: float):
    """
    Drift: x <- (1+eps) x + eta; Event: x <- alpha x every M steps.
    Returns array of norms ||x_t|| for t=0..N.
    """
    import numpy as np
    rng = np.random.default_rng(seed)
    x = np.zeros(d); x[0] = 10.0
    dists = []
    for t in range(N + 1):
        dists.append(float(np.linalg.norm(x)))
        if t > 0 and (t % M == 0):
            x = alpha * x
        else:
            eta = rng.normal(0.0, sigma, size=d)
            x = (1.0 + eps) * x + eta
    return np.array(dists)
\end{lstlisting}

\section*{Future Work}
\textbf{Empirical thresholds.} Run the harness with $(\rho,\alpha,M)$ sweeping across the SLLN threshold $\mathbb{E}[Y_1]=0$; show empirical agreement with \Cref{prop:slln}. \\
\textbf{Scheduling and noise.} Calibrate random/adversarial schedules and noise magnitudes against $(\rho^{M-1}\bar\mu)$. \\
\textbf{Geometry.} Instantiate nonexpansive retractions in concrete Banach models; explore CAT(0) encodings for sequence models. \\
\textbf{MC complexity.} Quantify overhead from guards as in \Cref{prop:complexity}; explore bounds for loop-unrolling vs.\ depth.

\appendix

\section{Reproducibility and Transparency}\label{app:repro}

\subsection{Parameters and regimes}\label{app:params}
Two regimes for the harness in \S\ref{sec:schedules}: (i) \textbf{convergent} with $(\rho^{M-1}\alpha)<1$, e.g., $M=5$, $\rho=1+\varepsilon=1.01$, $\alpha=0.8$; (ii) \textbf{divergent} with $(\rho^{M-1}\alpha)>1$, e.g., $M=5$, $\rho=1.05$, $\alpha=0.9$. Dimension $d=2$, noise $\sigma=0$.

\subsection{Core code used to generate the staircase plot}\label{app:code-core}
\begin{lstlisting}
import numpy as np
import matplotlib.pyplot as plt

def run_sim(seed: int, N: int, M: int, eps: float, alpha: float, d: int, sigma: float):
    rng = np.random.default_rng(seed)
    x = np.zeros(d); x[0] = 10.0
    dists = []
    for t in range(N + 1):
        dists.append(float(np.linalg.norm(x)))
        if t > 0 and (t % M == 0):
            x = alpha * x
        else:
            eta = rng.normal(loc=0.0, scale=sigma, size=d)
            x = (1.0 + eps) * x + eta
    return np.array(dists)

def main_staircase():
    N = 100; M = 5; d = 2; sigma = 0.0
    eps_conv, alpha_conv = 0.01, 0.8
    eps_div,  alpha_div  = 0.05, 0.9

    steps = np.arange(N + 1)
    log_conv = np.log(run_sim(42,N,M,eps_conv,alpha_conv,d,sigma) + 1e-12)
    log_div  = np.log(run_sim(42,N,M,eps_div, alpha_div, d,sigma) + 1e-12)

    plt.figure()
    plt.plot(steps, log_conv, linestyle='-', label='convergent ($\\rho^{M-1}\\,\\alpha < 1$)')
    plt.plot(steps, log_div,  linestyle='--', label='divergent ($\\rho^{M-1}\\,\\alpha > 1$)')
    for t in range(0, N + 1, M):
        plt.axvline(t, linestyle=':', linewidth=0.8)
    plt.xlabel('steps (n)'); plt.ylabel('log ||x_n||')
    plt.title('Staircase/log-distance envelope: convergent vs. divergent')
    plt.legend(); plt.grid(True, axis='y', linestyle=':', linewidth=0.6)
    plt.tight_layout(); plt.savefig('benchmark_plot.png')
\end{lstlisting}

\subsection{Monte-Carlo SLLN sweep (random gaps/drifts)}\label{app:code-mc}
\begin{lstlisting}
import numpy as np
import matplotlib.pyplot as plt

def sample_block_log_tau(M_dist, rho_dist, mu_dist, K, rng):
    """
    Returns Y_k = sum_{block} log tau_t for K blocks.
    Example dists: M_dist ~ Geometric or fixed; rho_dist ~ lognormal around 1;
    mu_dist ~ contraction in (0,1].
    """
    Y = []
    for _ in range(K):
        gap = int(M_dist(rng))  # number of steps in the block (>=1)
        # product over drifts (gap-1 drift steps) and a single event-contraction mu
        logs = 0.0
        for _ in range(max(0, gap-1)):
            rho = float(rho_dist(rng))
            logs += np.log(rho)
        mu = float(mu_dist(rng))
        logs += np.log(mu)
        Y.append(logs)
    return np.array(Y)

def run_mc_slln(seed=0, K=400, trials=64, eps_margin=0.0):
    rng = np.random.default_rng(seed)
    # Example distributions: geometric gaps with mean Mbar, rho around 1, mu around alpha
    Mbar = 5.0
    M_dist = lambda r: 1 + r.geometric(p=1.0/Mbar)  # >=1
    rho_dist = lambda r: np.exp(r.normal(loc=0.0, scale=0.01))  # mean ~1
    mu_dist  = lambda r: 0.80 + 0.02*r.standard_normal()        # mean <1
    slopes = []
    for _ in range(trials):
        Y = sample_block_log_tau(M_dist, rho_dist, mu_dist, K, rng)
        mhat = np.cumsum(Y) / (np.arange(K) + 1)
        slopes.append(mhat[-1])  # terminal average slope
    slopes = np.array(slopes)
    cls = (slopes < -eps_margin).astype(float)
    return slopes, cls

def main_mc_plot():
    K, trials = 400, 100
    slopes, cls = run_mc_slln(seed=123, K=K, trials=trials, eps_margin=0.0)
    mean_slope = float(np.mean(slopes)); ci = 1.96*np.std(slopes)/np.sqrt(trials)

    plt.figure()
    plt.hist(slopes, bins=20, density=True, alpha=0.7)
    plt.axvline(mean_slope, linestyle='--')
    plt.title(f'Monte-Carlo SLLN: mean slope={mean_slope:.4f} +/- {ci:.4f} (95% CI)')
    plt.xlabel('terminal average slope $\\hat m_K$'); plt.ylabel('density')
    plt.tight_layout(); plt.savefig('mc_slln.png')
\end{lstlisting}

\section{Practical Lipschitz Estimation for Attention Layers}\label{app:lip-est}
\paragraph{Matrix spectral norms (power iteration).}
For a linear map $W$, estimate $\|W\|_{2\to 2}$ by
\begin{lstlisting}
def spec_norm(W, iters=50):
    import numpy as np
    v = np.random.randn(W.shape[1]); v /= np.linalg.norm(v) + 1e-12
    for _ in range(iters):
        u = W @ v;  u_norm = np.linalg.norm(u) + 1e-12; u /= u_norm
        v = W.T @ u; v_norm = np.linalg.norm(v) + 1e-12; v /= v_norm
    return np.linalg.norm(W @ v)
\end{lstlisting}

\paragraph{Jacobian power iteration (empirical $L_h$).}
Given an automatic-differentiation framework, approximate $\|J_{U_h}(x)\|$ via Jacobian-vector products (JVPs):
\begin{lstlisting}
# Pseudocode sketch (framework-agnostic):
v <- random_unit_vector_like(x)
for t in 1..T:
    w <- JVP(U_h, x, v)          # computes J_{U_h}(x) v
    v <- w / ||w||
Lhat(x) <- ||w||
\end{lstlisting}
Aggregate $\widehat L_h=\max_{x\in \mathcal{D}} \mathrm{Lhat}(x)$ over a calibration set $\mathcal{D}$. Combine with $\|P_h\|$, $\Omega$ (Def.~\ref{def:overlap}), and $\|W_o\|$ (from power iteration) to test \Cref{cor:overlap} and \Cref{prop:nonorth}.

\end{document}